\documentclass{article}

\usepackage{microtype}
\usepackage{graphicx}
\usepackage{subcaption}
\usepackage{booktabs} % for professional tables
\usepackage{subcaption}
\usepackage{hyperref}

\usepackage[preprint]{configuration/icml2026}

\usepackage{amsmath}
\usepackage{amssymb}
\usepackage{mathtools}
\usepackage{amsthm}
\usepackage{multirow}
\usepackage{graphicx}
\usepackage{tcolorbox}
\usepackage{amsmath}

\usepackage[capitalize,noabbrev]{cleveref}

\usepackage[textsize=tiny]{todonotes}
\usepackage{amsmath,amssymb,amsthm}
\newtheorem{theorem}{Theorem}[section]
\newtheorem{lemma}[theorem]{Lemma}

\newtheorem{proposition}[theorem]{Proposition}
\newtheorem{remark}[theorem]{Remark}

\providecommand{\mpi}{\boldsymbol{\pi}}

\providecommand{\cA}{\mathcal{A}}

\providecommand{\cR}{\mathcal{R}}
\providecommand{\cS}{\mathcal{S}}

\providecommand{\cV}{\mathcal{V}}

\newenvironment{talign*}
{\csname align*\endcsname}
{\endalign}

\usepackage[utf8]{inputenc}         %
\usepackage[T1]{fontenc}            %
\usepackage{url}                    %
\usepackage{booktabs}               %
\usepackage{amsfonts}               %
\usepackage{nicefrac}               %
\usepackage{microtype}              %
\usepackage{xcolor}                 %
\usepackage{algorithm}
\usepackage{algorithmic}
\usepackage{graphicx}
\usepackage{subcaption}
\usepackage[flushleft]{threeparttable}
\usepackage{float}
\usepackage{multirow}
\usepackage{xspace}
\usepackage{natbib}
\usepackage{enumitem}
\usepackage[font=small]{caption}
\usepackage{autobreak}
\usepackage{sidecap}
\usepackage{wrapfig}
\usepackage{bbding}
\usepackage[toc, page, header]{appendix}
\usepackage{tikz}
\usepackage{xcolor}
\usepackage{pifont}
\usepackage{mdframed}

\hypersetup{
  colorlinks=true,
  linkcolor=blue,
  citecolor=blue,
  urlcolor=blue
}

\definecolor{coral}{RGB}{255,127,80}
\definecolor{darkgreen}{RGB}{0,100,0}
\definecolor{darkyellow}{RGB}{204,153,0}
\definecolor{salmon}{RGB}{250,128,114}

\usepackage{xspace}
\newcommand{\method}{\textsc{LIE}\xspace}

\usepackage{xurl}

\icmltitlerunning{Learn to Explore In-Context via Length-Incentivized Reinforcement Learning}

\begin{document}

\twocolumn[
  \icmltitle{Think Longer to Explore Deeper: Learn to Explore In-Context via Length-Incentivized Reinforcement Learning}

  % It is OKAY to include author information, even for blind submissions: the
  % style file will automatically remove it for you unless you've provided
  % the [accepted] option to the icml2026 package.

  % List of affiliations: The first argument should be a (short) identifier you
  % will use later to specify author affiliations Academic affiliations
  % should list Department, University, City, Region, Country Industry
  % affiliations should list Company, City, Region, Country

  % You can specify symbols, otherwise they are numbered in order. Ideally, you
  % should not use this facility. Affiliations will be numbered in order of
  % appearance and this is the preferred way.
  \icmlsetsymbol{equal}{*}
  \icmlsetsymbol{letter}{$\dagger$}
  \begin{icmlauthorlist}
    \icmlauthor{Futing Wang}{equal,zju,westlake,shailab}
    \icmlauthor{Jianhao Yan}{equal,zju,westlake,shailab}
    \icmlauthor{Yun Luo}{shailab,letter}
    \icmlauthor{Ganqu Cui}{shailab}
    \icmlauthor{Zhi Wang}{nju,shailab}
    \icmlauthor{Xiaoye Qu}{shailab}
    \icmlauthor{Yue Zhang}{westlake,wias}
    \icmlauthor{Yu Cheng}{cuhk,letter}
    \icmlauthor{Tao Lin}{westlake,letter}
  \end{icmlauthorlist}

  \icmlaffiliation{zju}{Zhejiang University}
  \icmlaffiliation{westlake}{Westlake University}
  \icmlaffiliation{shailab}{Shanghai AI Laboratory}
  \icmlaffiliation{nju}{Nanjing University}
  \icmlaffiliation{wias}{Institute of Advanced Technology, Westlake Institute for Advanced Study}
  \icmlaffiliation{cuhk}{The Chinese University of Hong Kong}

  \icmlcorrespondingauthor{Yun Luo}{luoyun1@pjlab.org.cn}
  \icmlcorrespondingauthor{Yu Cheng}{chengyu@cse.cuhk.edu.hk}
  \icmlcorrespondingauthor{Tao Lin}{lintao@westlake.edu.cn}

  % You may provide any keywords that you find helpful for describing your
  % paper; these are used to populate the "keywords" metadata in the PDF but
  % will not be shown in the document
  \icmlkeywords{Machine Learning, ICML}

  \vskip 0.3in
]

% this must go after the closing bracket ] following \twocolumn[ ...

% This command actually creates the footnote in the first column listing the
% affiliations and the copyright notice. The command takes one argument, which
% is text to display at the start of the footnote. The \icmlEqualContribution
% command is standard text for equal contribution. Remove it (just {}) if you
% do not need this facility.

% Use ONE of the following lines. DO NOT remove the command.
% If you have no special notice, KEEP empty braces:
\printAffiliationsAndNotice{}  % no special notice (required even if empty)
% Or, if applicable, use the standard equal contribution text:
% \printAffiliationsAndNotice{\icmlEqualContribution}

\begin{abstract}
  Achieving effective test-time scaling requires models to engage in In-Context Exploration --- the intrinsic ability to generate, verify, and refine multiple reasoning hypotheses within a single continuous context.
  Grounded in State Coverage theory, our analysis identifies a critical bottleneck to enabling this capability: while broader state coverage requires longer reasoning trajectories, the probability of sampling such sequences decays exponentially during autoregressive generation, a phenomenon we term the ``Shallow Exploration Trap''.
  To bridge this gap, we propose Length-Incentivized Exploration(\method).
  This simple yet effective recipe explicitly encourages models to explore more via a length-based reward coupled with a redundancy penalty, thereby maximizing state coverage in two-step manner.
  Comprehensive experiments across different models (Qwen3, Llama) demonstrate that \method effectively incentivize in-context exploration.
  As a result, our method achieves an average improvement of 4.4\% on in-domain tasks and a 2.7\% gain on out-of-domain benchmarks.
  Our code is publicly available in \url{https://github.com/LINs-lab/LIE}.
\end{abstract}
% required
\section{Introduction}

Scaling test-time computation, often conceptualized as enabling models to ``think harder'' before answering, has emerged as a powerful paradigm for breaking the performance ceiling of Large Language Models (LLMs) \citep{snell2024scaling, wu2024inference, liu2025can}.
Broadly, Test-Time Scaling (TTS) strategies fall into two primary regimes: Parallel Scaling~\citep{lightman2023let, snell2024scaling, liu2025can, brown2024large,wang2022self}, which aggregates outputs from multiple independent samples, and Sequential Scaling~\citep{guo2025deepseek, jaech2024openai,kumar2024training}, which prioritizes extended reasoning chains or iterative refinement.
While Parallel Scaling structures the search space externally,
Sequential Scaling via Long CoT represents an intrinsic dimension of TTS, as it directly harnesses the model's intrinsic reasoning capabilities \cite{snell2024scaling}.
% This internal dimension gives rise to in-context exploration, a capability that complements external aggregation by enhancing the depth and width of individual trajectories \citep{pan2025learning, venkatraman2025recursive}.

\looseness=-1
\begin{figure}
  \centering
  \includegraphics[width=\linewidth]{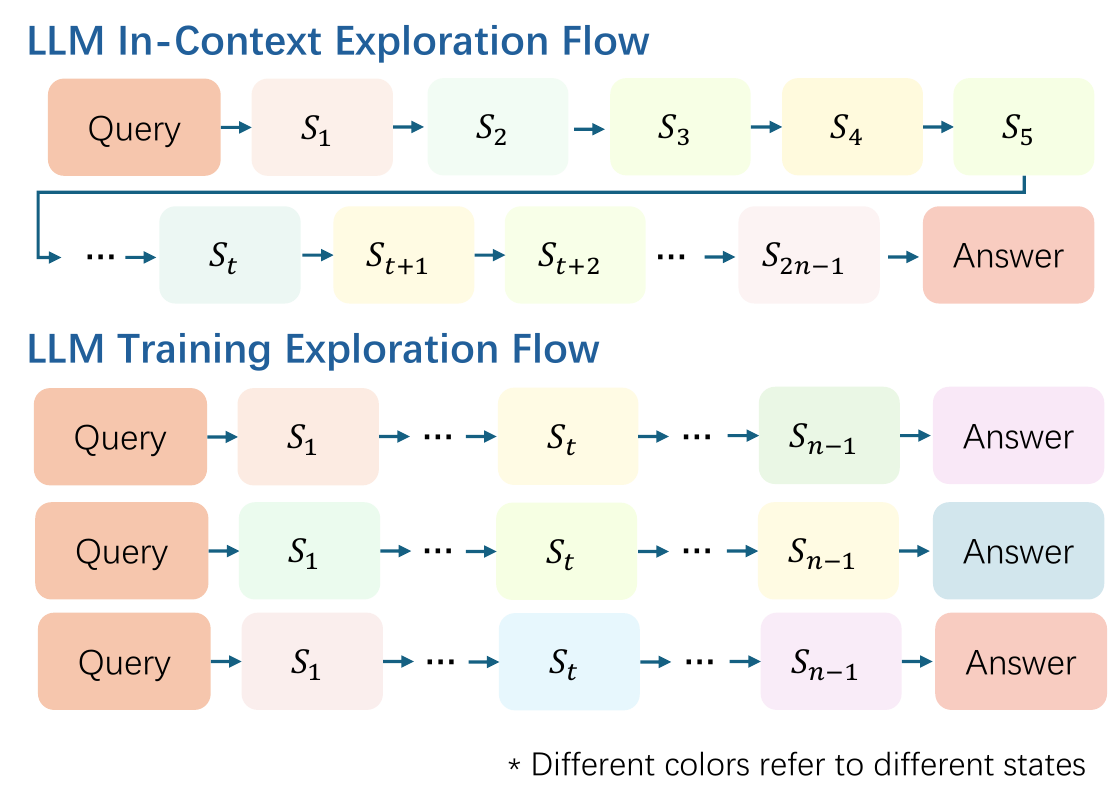}
  \vspace{-1.5em}
  \caption{\textbf{The difference between In-Context Exploration and Training Exploration.}
    Our framework distinguishes between the exploration of the training process and in-context inference.
    In the training phase, reinforcement learning incentivizes the model to explore and learn from diverse state distributions.
    In contrast, during test-time inference, in-context exploration empowers the model to actively traverse and navigate states.}
  \label{fig:intro}
  \vspace{-2em}
\end{figure}

% We refer to this intrinsic reasoning capability as \textit{in-context exploration}, which complements external aggregation \citep{pan2025learning, venkatraman2025recursive} by enhancing the depth and breadth of individual trajectories through the generation, verification, and refinement of hypotheses within a continuous context \citep{gandhi2024stream, setlur2025e3}.
One iconic capability among this intrinsic reasoning is \textit{in-context exploration}~\cite{setlur2025e3}, which is achieved through the generation, verification, and refinement of multiple hypotheses within a continuous context \citep{gandhi2024stream, setlur2025e3}.
% In this way, enabling the LLM to explore in-context is essential to activate the sequential scaling.
Consequently, fostering such in-context exploration serves as the critical catalyst for unlocking the full potential of sequential scaling.
% It complements external aggregation by enhancing the depth and breadth of individual trajectories \citep{pan2025learning, venkatraman2025recursive}.
% This is achieved through the generation, verification, and refinement of hypotheses within a continuous context \citep{gandhi2024stream, setlur2025e3}.
% \textit{In-Context Exploration}—the capability to generate, verify, and refine hypotheses within a continuous context—is central to internalizing the search process via autoregressive refinement \citep{gandhi2024stream}.
% \tao{Training Exploration is not formally defined. does it correspond to the conventional training?}
% As illustrated in Figure~\ref{fig:intro}, 
% % unlike exploration in training which focuses on learning from diverse state distributions, 
% in-context exploration empowers the model to sequentially explore diverse states such as different hypotheses, verification, and refinement, and finally induce the final answer.
As illustrated in Figure~\ref{fig:intro}, in-context exploration enables the model to sequentially traverse diverse reasoning states
% , encompassing hypothesis generation, self-verification, and iterative refinement, 
to find the correct answer.
% With 
However,
% while Reinforcement Learning (RL) recipes have demonstrated potential in scaling test-time compute --- allowing the model to improve performance by “thinking” for longer \citep{guo2025deepseek, deepscaler2025, setlur2025e3} --- 
how to effectively train LLMs to acquire in-context exploration remains underexplored.

To study this problem, we first analyze the bottleneck of in-context exploration through the lens of Count-Based Exploration theory in RL training \citep{auer2002using}, which establishes state coverage as a fundamental proxy for exploration quality.
We extend the theory into test-time in-context exploration and adopt state coverage, the in-context state diversity, as a theoretical proxy for the exploration quality.
Figure ~\ref{fig:intro} demonstrates the difference between in-context exploration and the classic rl training exploration.
Our theoretical analysis of state coverage exposes a critical bottleneck (Remark ~\ref{remark:conflit}): \textbf{achieving broader state coverage requires longer reasoning sequences, yet such sequences face exponentially diminishing sampling probabilities (\textit{Shallow Exploration Trap})} (Lemma ~\ref{lemma:shallow_exploration_trap}).
Empirically, we analyze the training dynamics of GSPO and GRPO baselines.
Our observations validate our theoretical results and also reveal that these methods implicitly incentivize response length to some degree, while decreasing states density (Section \ref{sec:analysis_baseline}).
This observation prompts our primary research question:
\begin{center}
  \textit{How to effectively incentivize in-context exploration in RL for test-time scaling?}
\end{center}
% \textit{Can we explicitly encourage models to think longer as a mechanism to activate broader in-context exploration, thereby achieving more effective test-time scaling?}

To address this, we propose Length-Incentivized Exploration (\method), an RL recipe that maximizes in-context state coverage, in a two-step manner.
The length reward elevates the upper bound of states, and the redundancy penalty matches the number of states given length~(as introduced in Section ~\ref{sec:recipe}).
% employs length rewards to facilitate exploration within vast trajectory spaces and enhance in-context exploration.
% Furthermore, to mitigate repetition collapse and ensure training stability, we incorporate a tailored repetition penalty into the reward function 
\looseness=-1
Experiments demonstrate that \method significantly improves performance compared to GRPO and GSPO baselines across diverse models (Qwen3~\citep{yang2025qwen3} and LLaMA~\citep{meta2024llama}).
Specifically, our method achieves consistent performance boosts, yielding a 4.4\% average gain on in-domain reasoning tasks and a 2.7\% improvement on out-of-domain benchmarks, underscoring its superior generalization capabilities.
% Our results reveal that explicitly encouraging long-trajectory reasoning---even within a limited 8k training window---effectively activates the model's intrinsic in-context exploration capacity, leading to superior performance when evaluating on a 32k generation scale.
% As trajectory length increases, the model's ability to visit ``deep'' reasoning states grows, which is strongly correlated with solving complex, multi-step problems. \futing{evidence?}

Our key contributions are summarized as follows:
\begin{itemize}[nosep, leftmargin=12pt]
  \item \textbf{Identifying the In-Context Exploration Bottleneck.}. We identify ``Shallow Exploration Trap'' as the bottleneck of in-context exploration, substantiating its existence through both theoretical derivations (Proposition~\ref{propos:length_as_capacity} \& Lemma~\ref{lemma:shallow_exploration_trap}) and empirical validations (Figure~\ref{fig:baseline}).
  \item \textbf{Length-Incentivized Exploration (\method) Recipe.} We propose a simple yet effective training recipe that explicitly incentivizes the model to overcome the bottleneck, enabling the model to explore during both training and inference. \looseness=-1
  \item \textbf{Empirical Validation and Test-Time Scaling}.
        Comprehensive experiments demonstrate that \method significantly outperforms standard baselines, achieving effective test-time scaling and eliciting more cognitive behaviors.
\end{itemize}

\section{Related Work}
% \paragraph{Test-time scaling}
\paragraph{Scaling Test-Time Compute via Long CoT.}
% what is scaling thinking and signicance
Recent works have shown remarkable potential in scaling test-time computation \citep{snell2024scaling} by training models to generate the long chains of thoughts (CoT) that enable strategic behaviors, including verification, self-correction \citep{kumar2024training}, etc.
Scaling test-time computation significantly enhances the performance of Large Language Models \citep{guo2025deepseek,team2025kimi,deepscaler2025,jaech2024openai} on various domains.
% how to scaling
% RL & s1
Reinforcement Learning (RL), in particular, facilitates ``natural'' scaling of test-time computation through intrinsic adaptive exploration and strategy application \citep{guo2025deepseek,team2025kimi, gandhi2025cognitive, setlur2025e3, yeo2025demystifying}.
These processes are mechanistically driven by negative gradients \citep{setlur2025e3,zhu2025surprising} or high entropy \citep{cui2025entropy,wang2025beyond,cheng2025reasoning}.
S1~\citep{muennighoff2025s1} forces the model to think longer through explicitly forcing the generation of additional tokens to extrapolate thinking.
However, we focus directly on increasing the computation usage during reinforcement training to activate in-context exploration.
\vspace{-1em}
\paragraph{Length-Aware Reasoning.}
Recent studies have investigated length-aware reasoning, specifically aiming to mitigate overthinking and improve both efficiency and controllability.
Several works \citep{jiang2025think,huang2025adactrl,zhang2025adaptthink,kang2025c3ot,zhang2025othink,yu2025think,yang2025towards,liu2025learnreasonefficientlyadaptive} propose adaptively switching between long and short Chain-of-Thought (CoT) based on problem difficulty through training.
Others \citep{zhang2025alphaone,ma2025cot} focus on controlling token usage during inference, while some studies address reasoning within a specific budget \citep{wen2025budgetthinker,aggarwal2025l1, xu2025scalable}.
In contrast to these efficiency-centric approaches that primarily aim to optimize token usage or curb overthinking, we posit that extended reasoning is a prerequisite for reaching deep reasoning states.
We investigate explicitly scaling up reasoning trajectories not as a burden, but as a mechanism to break the ``Shallow Exploration Trap'' and activate intrinsic exploratory behaviors.

\section{Background} \label{sec:background}
% \tao{use bold for vector/matrix/tensor, e.g., $\yy$}
In this section, we formulate LLM reasoning as a Markov Decision Process (MDP) and review the theoretical foundations of Count-Based Exploration in traditional reinforcement learning.

\subsection{MDP Formulation of LLM Reasoning}
\paragraph{LLM Reasoning as an MDP.}
We model the autoregressive generation process of a Large Language Model (LLM) as a deterministic MDP tuple $(\cS, \cA, \mpi, T)$, where
\begin{itemize}[nosep, leftmargin=12pt]
  \item \textbf{State Space ($\cS$):}
        The state space $\cS$ consists of all possible sequences of tokens from the vocabulary $\cV$.
        Crucially, a specific state $s_t \in \cS$ at time step $t$ is the concatenation of the input query (prompt) $x$ and the thought chain generated so far $y_{<t}$. Formally, $s_t = [x, y_1, \dots, y_{t-1}]$.

  \item \textbf{Action Space ($\cA$):}
        The action space is discrete and equivalent to the model's vocabulary $\cV$, where an action $a_t \in \cV$ corresponds to selecting the next token to append to the current sequence.

  \item \textbf{Transition Dynamics ($T$):}
        The transition is deterministic. Given the current state $s_t$ and action $a_t$, the next state is uniquely determined by appending the token to the history: $s_{t+1} = s_t \oplus a_t$. The process terminates when a special end-of-sequence (EOS) token is generated.

  \item \textbf{Policy ($\mpi_\theta$):}
        The LLM functions as a parameterized policy $\mpi_\theta(a_t | s_t)$, which maps the current context $s_t$ to a probability distribution over the vocabulary $\cV$.
\end{itemize}
Due to the combinatorial nature of language, the size of $\cS$ grows exponentially with sequence length ($|\cS| \approx |\cV|^L$), making the state space extremely vast and sparse.

\vspace{-1em}
\paragraph{Reinforcement Learning for LLM Reasoning.}
To optimize the policy $\mpi_\theta$ to maximize the expected return $J(\pi_\theta) = \mathbb{E}[\sum r_t]$, the optimization is typically performed via \textbf{Policy Gradient} \citep{sutton1999policy}:
\[
  \nabla_\theta J(\mpi_\theta) = \mathbb{E}_{\tau}\left[\sum_{t=0}^{T} \nabla_\theta \log \mpi_\theta(a_t \mid s_t) \, A^ {\mpi}(s_t, a_t)\right] \,,
\]
where $A^{\mpi}$ represents the advantage of taking action $a_t$.
GRPO and GSPO are instantiations of this framework.

\textbf{Group Relative Policy Optimization (GRPO).}
GRPO~\citep{shao2024deepseekmath} performs optimization at the token level without a value function.
It utilizes the standard per-token probability ratio:
\begin{equation}
  \rho_{i,t}(\theta) = \frac{\mpi_\theta(y_{i,t}|x, y_{i,<t})}{\mpi_{\theta_{\text{old}}}(y_{i,t}|x, y_{i,<t})} \,,
\end{equation}
where $y_{i,t}$ is the $t$-th token of the $i$-th sequence. The advantage is computed via group normalization:
\begin{equation}
  \hat{A}_i = \frac{R_i - \text{mean}(\{R_j\}_{j=1}^G)}{\text{std}(\{R_j\}_{j=1}^G)} \,.
\end{equation}
The objective averages the PPO-clip loss \citep{schulman2017proximal} over all tokens in the generated sequences:
\begin{equation}
  \small
  \begin{split}
    J^{\text{GRPO}}(\theta) & = \mathbb{E}_{x, \{y_i\} \sim \mpi_{\theta_{\text{old}}}} \bigg[ \frac{1}{G} \sum_{i=1}^G \frac{1}{|y_i|} \sum_{t=1}^{|y_i|} \min \Big( \rho_{i,t}(\theta)\hat{A}_i, \\
                            & \quad \text{clip}(\rho_{i,t}(\theta), 1-\epsilon, 1+\epsilon)\hat{A}_i \Big) \bigg] \,.
  \end{split}
\end{equation}

\textbf{Group Sequence Policy Optimization (GSPO).}
GSPO~\citep{zheng2025group} elevates optimization to the sequence level using length-normalized importance ratios:
\begin{equation}
  \begin{split}
    \rho_i(\theta) = \left( \prod_{t=1}^{|y_i|} \frac{\mpi_\theta(y_{i,t}|x, y_{i,<t})}{\mpi_{\theta_{\text{old}}}(y_{i,t}|x, y_{i,<t})} \right)^{1/|y_i|} \,,
  \end{split}
\end{equation}
where the $1/|y_i|$ exponent reduces variance from varying sequence lengths.
Using the same group-based advantage $\hat{A}_i$ as GRPO, the objective is computed once per sequence.
\subsection{Theoretical Foundation: Incentivizing State Coverage via Count-Based Exploration} \label{sec:count-based exploration}
Count-based exploration is a fundamental strategy to address the exploration-exploitation tradeoff.
Central to this approach is the \textbf{state visitation count} $N(s)$, which records the cumulative frequency of visiting a state $s$.
These counts serve as a proxy for \textbf{state coverage} --- the diversity of states visited during exploration.
Guided by the principle of \textit{Optimism in the Face of Uncertainty} \cite{auer2002finite}, count-based methods augment the extrinsic reward with an exploration bonus $b(s)$, typically inversely proportional to the counts (e.g., $\propto 1/\sqrt{N(s)}$).
% \looseness=-1
% \paragraph{The Upper-Confidence Bound Framework.}

\vspace{-1em}
\paragraph{Theoretical Guarantees for Exploration.}
The optimality of count-based exploration is formally established in the Multi-Armed Bandit (MAB) setting. More detailed descriptions are shown in Appendix \ref{appedix: count_based_exploration}.
The Upper-Confidence Bound (UCB) algorithm selects action $a_t$ by balancing reward estimates with visitation counts:
\begin{equation}\label{eq:ucb}
  a_t = \mathop{\arg\max}_{a\in\mathcal{A}} \hat{R}_t(a) + \sqrt{\frac{2\log t}{n_t(a)}} \,,
\end{equation}
where $\hat{R}_t(a)$ and $n_t(a)$ is the reward estimate and visitation count for action $a$ at time $t$.
The bonus term $\sqrt{2\log t / n_t(a)}$ decreases as the action is sampled more frequently.
Crucially, this bonus minimizes cumulative regret, yielding a theoretical guarantee for efficiently identifying the optimal action, formalized as:
% \tao{be careful with the colored text in appendix.}
\begin{theorem}[\textbf{Optimality of Count-based Exploration}~\citep{auer2002using}] \label{theo:optimality_count_based_exploration}
  In an MAB setting, let $L(T)=\mathbb{E}[\sum_{t=1}^T\left(R^*-R(a_t)\right)]$ denote the total regret over $T$ steps, where $R(a)=\mathbb{E}_{\mathcal{R}^a}[R]$ is the expected reward for any action $a$.
  $\cR^a$ is the reward distribution for action $a$ and $R^*$ is the reward for the optimal action.
  The UCB algorithm (Eq.~\ref{eq:ucb}) achieves this optimal bound:
  \begin{equation}
    \lim_{T\to\infty}L(T) \le 8\log T \cdot\sum_{a|\Delta_a>0}\Delta_a \,,
  \end{equation}
  where $\Delta_a$ is the reward gap between action $a$ and the optimal action.
\end{theorem}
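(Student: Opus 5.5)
We follow the standard finite-time analysis of UCB1 from \citet{auer2002using}, assuming rewards are bounded in $[0,1]$ and i.i.d.\ per arm. This is the setting in which the constant $8$ arises. The statement is really a finite-$T$ bound; the ``$\lim$'' should be read as $L(T)\le 8\log T\sum_{a:\Delta_a>0}\Delta_a+O(1)$.

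\textbf{Step 1: regret decomposition.} We write the regret as
\[
L(T)=\sum_{a:\Delta_a>0}\Delta_a\,\mathbb{E}[n_T(a)],
\]
so it suffices to show $\mathbb{E}[n_T(a)]\le \frac{8\log T}{\Delta_a^2}+c$ for each suboptimal arm $a$, with $c=1+\pi^2/3$. This gives $L(T)\le \sum_a \frac{8\log T}{\Delta_a}+c\sum_a\Delta_a$. To land on the form $8\log T\cdot\sum_a\Delta_a$ stated in the theorem, we would normalize $\Delta_a\le 1$. Strictly, the $1/\Delta_a$ form is the correct one, and we would note this as a discrepancy in the statement rather than try to prove it as written.

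\textbf{Step 2: counting pulls of a suboptimal arm.} Fix a suboptimal arm $a$ and set $\ell=\lceil 8\log T/\Delta_a^2\rceil$. Then
\[
n_T(a)\le \ell+\sum_{t}\mathbf{1}\{a_t=a,\ n_{t}(a)\ge \ell\}.
\]
Whenever $a$ is chosen with $n_t(a)\ge\ell$, the index of $a$ must exceed the index of the optimal arm $a^*$. This forces at least one of three events:
\begin{enumerate}
\item the optimal arm's mean is underestimated: $\hat R(a^*)\le R^*-\sqrt{2\log t/n(a^*)}$;
\item arm $a$'s mean is overestimated: $\hat R(a)\ge R(a)+\sqrt{2\log t/n(a)}$;
\item the gap is not yet resolved: $R^*<R(a)+2\sqrt{2\log t/n(a)}$.
\end{enumerate}
The choice $n(a)\ge\ell$ rules out the third event. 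To avoid dependence on the random counts, we take a union bound over all possible values of $n(a^*)$ and $n(a)$ in $\{1,\dots,t\}$.

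\textbf{Step 3: concentration.} By Hoeffding's inequality, each of the first two events has probability at most $t^{-4}$. Summing over the union bound,
\[
\sum_t\sum_{s,s'\le t}2t^{-4}\le \sum_t 2t^{-2}=\pi^2/3.
\]
This yields the constant $c$ in Step 1.

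The main obstacle is the union bound over random sample counts. Using Hoeffding at random stopping times is not directly valid, so we must take the union over all deterministic counts $s,s'$ and verify the resulting double sum still converges. This is exactly where the $\sqrt{2\log t}$ bonus, giving the $t^{-4}$ tail, is needed.
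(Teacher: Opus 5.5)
Your Steps 1--3 are the standard finite-time UCB1 analysis of Auer, Cesa-Bianchi and Fischer (2002), and the steps are correct. The paper gives no proof of its own and simply defers to that literature, so on the argument itself you match it. What your steps actually prove is
\[
L(T)\le \sum_{a:\Delta_a>0}\frac{8\log T}{\Delta_a}+\Bigl(1+\frac{\pi^2}{3}\Bigr)\sum_{a}\Delta_a .
\]
Your proposed bridge to the stated form is backwards. Normalizing $\Delta_a\le 1$ gives $1/\Delta_a\ge \Delta_a$. So this bound is \emph{weaker} than $8\log T\sum_a\Delta_a$ and does not imply it.

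No other argument can close that gap, because the statement is false under any sensible reading of its ill-formed limit, for instance $\limsup_{T\to\infty}L(T)/\log T\le 8\sum_a\Delta_a$. Take two Bernoulli arms with means $1/2$ and $1/2-\Delta$. Your own bound shows UCB1 has $O(\log T)$ regret on every $[0,1]$-valued instance, so it is a consistent policy. Hence the Lai--Robbins lower bound applies; the paper itself quotes it in the appendix restatement. It gives $\liminf_{T\to\infty}L(T)/\log T\ge \Delta/\mathrm{KL}(\mathrm{Ber}(1/2-\Delta)\,\|\,\mathrm{Ber}(1/2))\ge \Delta/(4\Delta^2)=1/(4\Delta)$, using $\mathrm{KL}\le\chi^2=(p-q)^2/(q(1-q))$. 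This exceeds $8\Delta$ whenever $\Delta<1/\sqrt{32}$. For example, at $\Delta=0.1$ the lower bound on $\liminf_T L(T)/\log T$ is $2.5$, against the claimed upper bound of $0.8$. So your fallback is the correct conclusion: delete the normalization sentence and state that the theorem holds only in the $1/\Delta_a$ form displayed above.
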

% This principle extends to MDPs by counting state-action pairs $n(s,a)$, providing near-optimal guarantees within the PAC-MDP framework~\citep{strehl2008analysis, bellemare2016unifying}. 

\begin{remark} \label{remark:theo_insight}
  The count-based exploration principle provides a fundamental insight for LLM reasoning: \textbf{effective exploration requires maximizing state coverage}.
  % In standard RL training, visitation counts are aggregated across the entire state space over many episodes.
  % For in-context exploration, however, we must adapt this principle to quantify state diversity within a \emph{single} reasoning trajectory.
  It is crucial to note that in standard RL training, the visitation count $N(s)$ is aggregated across the \textbf{whole state space} over many training episodes.
  For in-context exploration, however, we must adapt this principle to quantify state diversity within a \emph{single} reasoning trajectory.
\end{remark}

\section{In-Context Exploration} \label{sec:problem}
While the standard MDP formulation addresses training-time optimization, \textit{In-Context Exploration} addresses how the model scales at test time.
As highlighted in Remark~\ref{remark:theo_insight} and Figure~\ref{fig:intro}, the focus shifts from global state coverage to maximizing state coverage within a single trajectory.
% Specifically, within a reasoning trajectory $\boldsymbol{\tau}$, to facilitate in-context exploration, we encourage the model to explore more states during test time.
% The key distinction lies in the scope: training exploration aggregates visitation counts across the entire state space over many episodes, whereas in-context exploration quantifies exploration diversity within one continuous reasoning chain.
\looseness=-1

\subsection{In-Context State Space}
\label{sec:in_context_state}
% Inspired by the count-based exploration theory in Section~\ref{sec:count-based exploration}, we aim to quantify exploration within a single reasoning trajectory $\boldsymbol{\tau} = (y_1, y_2, \dots, y_L)$, where $L$ is the trajectory length.
\looseness=-1
% We first define the in-context states $S_{\text{IC}}(\tau) = \{s_1, s_2, \dots\}$ as the sequence of states visited during generation.
% However, a direct application of count-based exploration faces the state uniqueness problem: since every state $s_t = [x, y_{<t}]$ encapsulates the entire unique history, no raw state is ever visited twice (i.e., the visitation count $N(s_t) \equiv 1$).
% This makes measurement of state coverage trivial and ineffective.
\paragraph{Defining in-context states.}
Given a single reasoning trajectory $\boldsymbol{\tau} = (y_1, y_2, \dots, y_L)$, where $L$ is the trajectory length, we define the in-context state space as the sequence of autoregressive states visited during generation:
\begin{equation} \label{eq:in_context_states}
  S_{\text{IC}}(\boldsymbol{\tau}) = \{s_1, s_2, \dots, s_L\} \,,
\end{equation}
A direct application of count-based states to in-context states faces a fundamental challenge: since every state $s_t$ contains a unique history prefix, no raw state is ever visited twice within a trajectory, rendering state coverage measurement trivial and meaningless.
\looseness=-1
\paragraph{State abstraction.}
To derive meaningful visitation counts, we introduce a state abstraction function $\phi$ that maps raw states to abstract states capturing the \textit{logical state} of reasoning rather than the literal history.
Following evidence that semantics is predominantly encoded in immediate local patterns~\citep{levy2014neural}, we employ \textit{last-$n$-grams} as our abstraction:
\begin{equation}\label{eq:state_abstraction}
  \phi(s_t) = (y_{t-n+1}, \dots, y_t) \,.
\end{equation}
By aggregating states based on these local patterns, we obtain pseudo-visitation counts that meaningfully estimate state coverage within a single context.
\looseness=-1

\begin{figure}[t]
  \centering
  \centering
  \includegraphics[width=\linewidth]{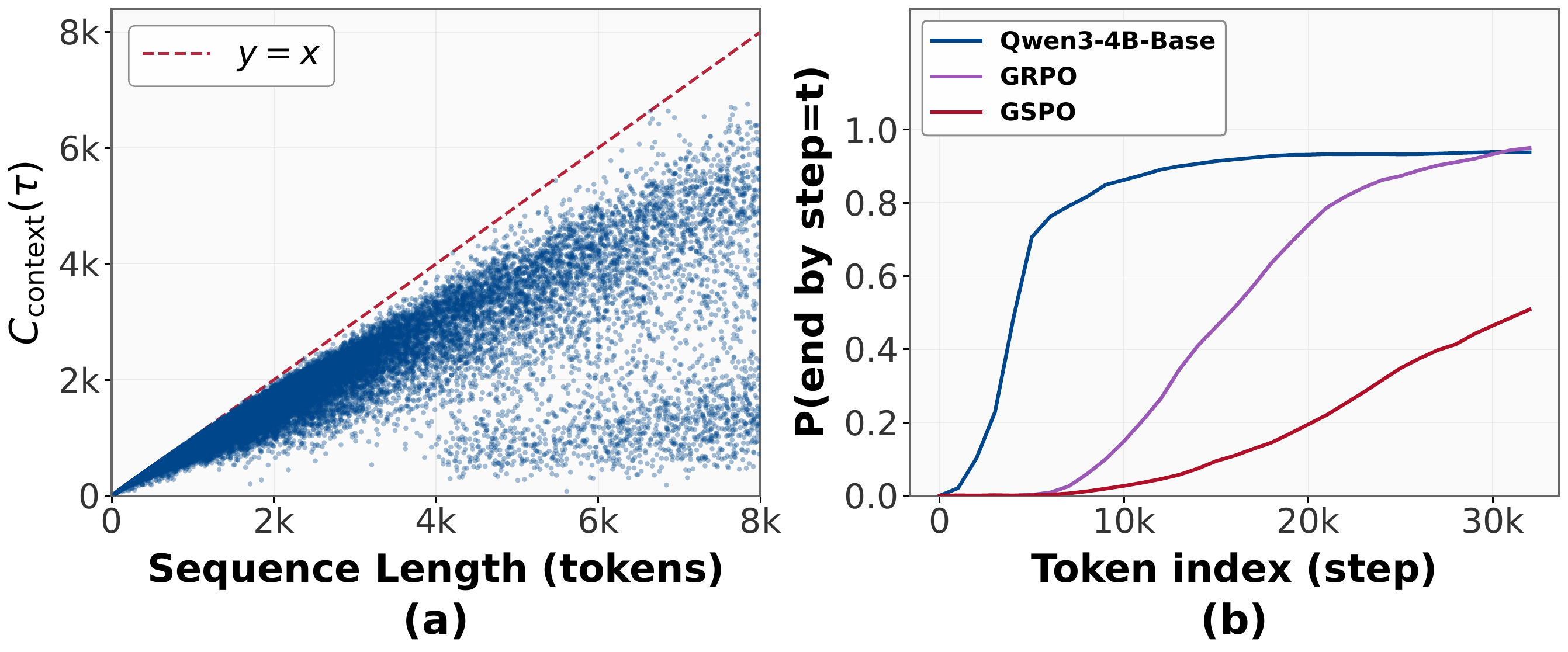}
  \caption{\textbf{The Length Bottleneck of In-Context Exploration.} (a) \textbf{Capacity:} Trajectory length dictates the maximum possible state coverage (Proposition \ref{propos:length_as_capacity}). (b) \textbf{``Shallow Exploration Trap'':} The probability of reaching deep states decays exponentially (Lemma \ref{lemma:shallow_exploration_trap}), preventing the model from utilizing this capacity.}
  \label{fig:motivation}
  \vspace{-1em}
\end{figure}

\paragraph{Quantifying in-context exploration.}
Furthermore, we can now quantify in-context exploration quality.
Following the count-based exploration principle from Section~\ref{sec:count-based exploration},
we define the \emph{\textbf{In-Context Distinct State Count}} $C_{\text{context}}(\boldsymbol{\tau})$ as the cardinality of unique context states visited within a trajectory:
\begin{equation} \label{eq:coverage_metric}
  C_{\text{context}}(\boldsymbol{\tau}) = \bigl| \{ \phi(s_t) \mid t = 1, \dots, L \} \bigr| \,.
\end{equation}
This metric captures the \textit{volume} of the state space covered during reasoning.

% \item \textbf{In-Context Distinct Ratio ($R_{\text{context}}$):}
%       To normalize against trajectory length, we divide the count by the total number of generated $n$-grams $M$:
%       $$ R_{\text{context}}(\boldsymbol{\tau}) = \nicefrac{C_{\text{context}}(\boldsymbol{\tau})}{M} \,. $$
%       While the Count $C_{\text{context}}$ measures volume, the Ratio $R_{\text{context}}$ measures the \textit{efficiency} of exploration.
%       A low ratio indicates low information density, suggesting the model is inflating length without generating novel information.
Section~\ref{sec:count-based exploration} posits that the optimal exploration strategy involves assigning a bonus reward proportional to the inverse of state visitation counts, i.e, $\propto 1/\sqrt{N(s)}$, which corresponds to maximizing $C_{\text{context}}$.
However, as described in Goodhart's law~\citep{goodhart1984problems} and discussed in Appendix~\ref{appendix:distinct_reward}, directly maximizing $C_{\text{context}}$ leads to losing efficacy and reward hacking in RL training.
% By maximizing $C_\text{context}$, we minimize the ``revisit'' frequency of in-context states.

% Intuitively, maximizing $C_{\text{context}}$ is the objective to enhance in-context exploration during test time.\tao{why? not fully convinced. add a motvation, e.g., from AI: Maximizing $C_{\text{context}}$ directly addresses the exploration-length paradox by encouraging the model to generate more novel local patterns within reachable trajectory lengths, effectively increasing the information density per generated token.}

% \paragraph{Global (Corpus-Level) Diversity}
% To assess the diversity across the entire set of trajectories $\mathcal{T}$, we extend these metrics globally.
% The \textbf{Global Distinct Count} is the cardinality of the union of unique $n$-grams across all samples:
% $$ C_{\text{global}}(\mathcal{T}) = \left| \bigcup_{\boldsymbol{\tau} \in \mathcal{T}} \{ g \mid g \in \mathcal{G}_{\boldsymbol{\tau}} \} \right| $$
% This quantifies the \textbf{total volume} of the \textbf{output space} covered by the model.
% Correspondingly, the \textbf{Global Distinct Ratio} is defined as the global count divided by the sum of all generated $n$-grams:
% $$ R_{\text{global}}(\mathcal{T}) = \frac{C_{\text{global}}(\mathcal{T})}{\sum_{\boldsymbol{\tau} \in \mathcal{T}} |\mathcal{G}_{\boldsymbol{\tau}}|} $$
% This evaluates whether the exploration covers diverse regions of the whole language output space.
% A high global ratio implies high inter-trajectory diversity, whereas a low ratio suggests the model collapses to similar patterns across different contexts.

\begin{figure*}[!t]
  \centering
  \centering
  \includegraphics[width=\linewidth]{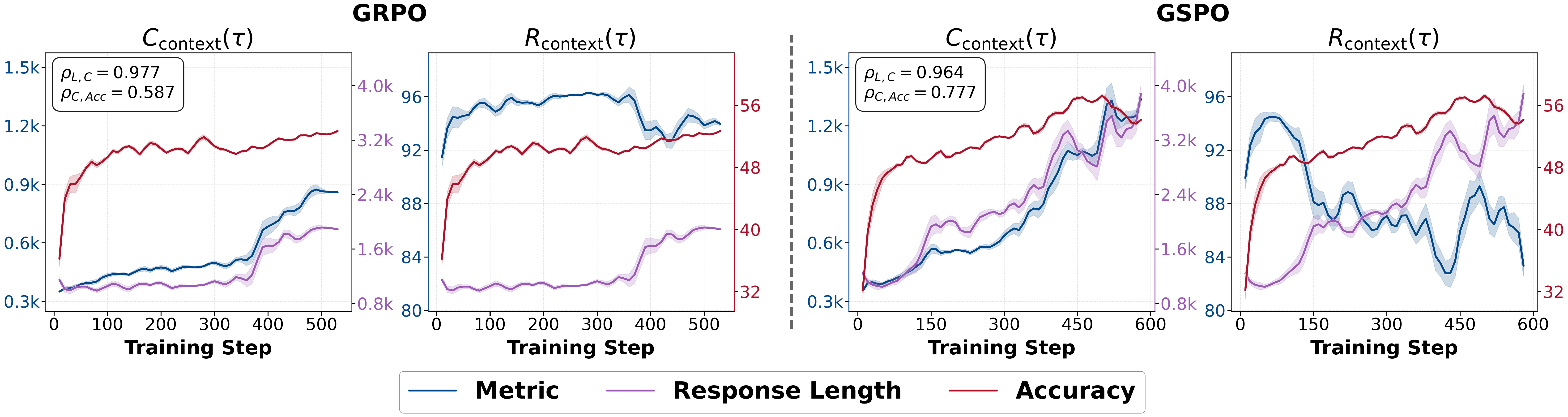}
  \caption{\textbf{The training dynamics of $C_{\text{context}}$ and $R_{\text{Context}}$ in GRPO and GSPO on Qwen3-4B-Base}.
    Two limitations are observed: (1) \textbf{Shallow Exploration Trap}: GRPO faces bottlenecks in extending trajectory length and performance, while GSPO shows slow length expansion.
    (2) \textbf{Degrading Information Density}: Both methods display degradation in ratio over time.}
  \label{fig:baseline}
  \vspace{-1em}
\end{figure*}

\subsection{The Length Bottleneck of In-Context Exploration}
% \tao{Figure 2 introduces "Exploration-Length Paradox" but this isn't clearly motivated or explained in the text. You state that "in-context exploration encourages the model to explore more states" but never explicitly state the paradox: longer trajectories allow more coverage but are harder to reach due to exponential decay.}
% Having established $C_{\text{context}}(\boldsymbol{\tau})$ as the objective for in-context exploration, we now analyze the structural constraints that affect its maximization.
Given the unreliability of direct optimization, we instead analyze a more fundamental question: \emph{what structural factor bottlenecks the growth of $C_{\text{context}}$?}

\vspace{-1em}
\paragraph{Length as the capacity for exploration.}
From the definition in Eq.~\ref{eq:state_abstraction} and Eq.~\ref{eq:coverage_metric}, we first derive a relationship between trajectory length and exploration potential:
\begin{proposition}[\textbf{Length as the Capacity for Exploration}] \label{propos:length_as_capacity}
  The cumulative exploration utility of a trajectory is strictly upper-bounded by its length $L$:
  \begin{equation}
    C_{\text{context}}(\boldsymbol{\tau}) \lesssim L \,.
  \end{equation}
  Consequently, extending the sequence length $L$ is a \textbf{necessary condition} to raise the ceiling of exploration potential.
\end{proposition}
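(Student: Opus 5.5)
The argument is a direct counting bound on the image of the abstraction map $\phi$. By Eq.~\ref{eq:coverage_metric}, $C_{\text{context}}(\boldsymbol{\tau})$ is the cardinality of the set $\{\phi(s_t) \mid t = 1, \dots, L\}$. This set is the image of the index set $\{1,\dots,L\}$ under the map $t \mapsto \phi(s_t)$. A function cannot have more distinct values than its domain has elements, so $C_{\text{context}}(\boldsymbol{\tau}) \le L$ immediately. I will state this elementary cardinality inequality first, since it carries the whole proposition.

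The next step accounts for the ``$\lesssim$'' and the $n$-gram abstraction of Eq.~\ref{eq:state_abstraction}. For $t < n$, the window $(y_{t-n+1},\dots,y_t)$ is not fully defined. There are two conventions:
\begin{itemize}[nosep, leftmargin=12pt]
  \item pad with prompt tokens, in which case each of the $L$ positions still contributes at most one abstract state;
  \item count only complete $n$-grams, giving at most $M = L-n+1$ states.
\end{itemize}
In either case $C_{\text{context}}(\boldsymbol{\tau}) \le L$, and in the second case the bound sharpens to $L-n+1$. The difference between these is $O(n)$ with $n$ a fixed constant, so the statement holds up to additive constants, which is the intended meaning of $\lesssim$. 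I will also note the alternative global bound $C_{\text{context}} \le |\cV|^n$, so that the full statement reads $C_{\text{context}} \le \min(L, |\cV|^n)$. Since $|\cV|^n$ is astronomically larger than any practical $L$, the length term is the binding one.

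Finally I will derive the ``necessary condition'' clause by contraposition. Suppose a target coverage level $c > L$ is desired. The bound shows that no trajectory of length $L$ attains it, whatever the policy. Raising the achievable ceiling of $C_{\text{context}}$ therefore requires increasing $L$. To show the bound is tight, so that length is exactly the capacity and not a loose proxy, I will exhibit a trajectory whose $n$-grams are all distinct; for example, a de Bruijn-type sequence achieves equality whenever $L \le |\cV|^n$.

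There is no real obstacle here. The only delicate point is keeping the boundary convention for $t<n$ explicit, so that ``$\lesssim$'' is justified rather than hand-waved.
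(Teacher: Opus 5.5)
Your proposal is correct and uses the same argument as the paper. The paper gives no separate proof: it reads the bound directly off Eqs.~\ref{eq:state_abstraction} and~\ref{eq:coverage_metric}, since $C_{\text{context}}(\boldsymbol{\tau})$ counts distinct values over the $L$ positions, which is your image-of-a-map cardinality step. Your boundary convention for $t<n$, the $|\cV|^n$ cap, and the de Bruijn tightness construction go beyond what the paper writes; the last one makes precise its later remark in Section~\ref{sec:recipe} that equality holds only when redundancy is minimized.
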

\looseness=-1
Proposition~\ref{propos:length_as_capacity} establishes that sequence length $L$ defines the ``theoretical capacity'' for state discovery (as evidenced in Figure~\ref{fig:motivation}a).
To maximize state coverage, the model must generate sufficiently long trajectories.
\looseness=-1
\paragraph{The Shallow Exploration Trap.}
However, language models face intrinsic difficulties in sampling long sequences, as stated below.
\begin{lemma}[\textbf{Exponential Decay of Long Sequences}]  \label{lemma:shallow_exploration_trap}
  Let $\mathcal{S}_L$ denote the set of states (i.e., responses in LLM reasoning) with sentence length $L$, and $p(\mathcal{S}_L)$ denote the sampling probability for that state set.
  Then, there exists a constant $\epsilon\in (0,1)$, such that the sampling probability of $\mathcal{S}_L$ is upper-bounded by an exponentially decaying function of its length $L$, the proof can be found in Appendix ~\ref{appendix_lemma:length}: \looseness=-1
  \begin{equation}
    p(\mathcal{S}_L) < (1-\epsilon)^{L-1} \,.
  \end{equation}
\end{lemma}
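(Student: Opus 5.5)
The plan is to read $p(\mathcal{S}_L)$ as the probability that the autoregressive policy $\mpi_\theta$ generates a response of length at least $L$, that is, it does not emit EOS during the first $L-1$ decoding steps. This is the natural reading, since every response of length exactly $L$ is a continuation of a prefix of length $L-1$ that avoided termination, so bounding the survival event also bounds $p(\mathcal{S}_L)$. The key structural assumption, implicit in the MDP formulation of Section~\ref{sec:background}, is a uniform lower bound on the termination probability:
\[
\mpi_\theta(\text{EOS}\mid s_t)\ge \epsilon \quad \text{for every reachable state } s_t .
\]
This holds for some $\epsilon\in(0,1)$ because the policy is a softmax over a finite vocabulary with bounded logits, so every token, EOS included, has probability bounded away from $0$. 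It also holds for the effectively finite set of reachable contexts under a maximum context length.

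The argument then proceeds by the chain rule. I would write
\[
p(\mathcal{S}_L)\le \Pr[y_1,\dots,y_{L-1}\neq \text{EOS}] = \sum_{y_{<L}} \prod_{t=1}^{L-1}\mpi_\theta(y_t\mid s_t),
\]
where the sum ranges over non-EOS prefixes. I would prove by induction on $t$ that $\Pr[\text{no EOS in the first } t \text{ steps}]\le(1-\epsilon)^t$. The inductive step conditions on any surviving prefix $s_t$ and uses
\[
\sum_{a\neq\text{EOS}}\mpi_\theta(a\mid s_t)=1-\mpi_\theta(\text{EOS}\mid s_t)\le 1-\epsilon ,
\]
and then averages over prefixes. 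Taking $t=L-1$ gives $p(\mathcal{S}_L)\le(1-\epsilon)^{L-1}$.

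For the strict inequality, I would note that the final token must be EOS, which contributes a factor $\mpi_\theta(\text{EOS}\mid s_L)<1$. Alternatively, responses of length exactly $L$ form a strict subset of the survival event whenever longer continuations have positive probability. Either observation yields the strict bound stated in Lemma~\ref{lemma:shallow_exploration_trap}.

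The main obstacle is justifying the uniform $\epsilon$, because in general $\mpi_\theta(\text{EOS}\mid s)$ can approach $0$ on some contexts. I would handle this in two ways. First, via bounded logits: if $|z_a|\le B$ for all tokens, then
\[
\mpi_\theta(\text{EOS}\mid s)\ge \frac{e^{-B}}{|\cV|e^{B}},
\]
which gives an explicit $\epsilon$. Second, as a fallback, I would define
\[
\epsilon=\inf_s \mpi_\theta(\text{EOS}\mid s)
\]
over reachable states and state the lemma conditionally on $\epsilon>0$. A per-step bound on the total mass of non-EOS tokens suffices in either case.
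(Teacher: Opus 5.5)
Your proposal is correct and follows essentially the same route as the paper: a uniform lower bound $\epsilon$ on the EOS probability, a chain-rule product of $L-1$ non-termination factors each at most $1-\epsilon$, and strictness from the final EOS factor being less than $1$. Your version is more careful than the paper's in two places. The paper treats $p_i([\text{EOS}])$ as a single per-step quantity without spelling out the conditioning on the prefix, and it infers a uniform $\epsilon$ merely from $p_L([\text{EOS}])\in(0,1)$, which alone does not rule out an infimum of $0$; your averaging over surviving prefixes and your bounded-logit or finite-context justification close both gaps. One wording fix: read $p(\mathcal{S}_L)$ as the exactly-length-$L$ event throughout, since the ``at least $L$'' reading in your opening sentence gives equality at $L=1$ and breaks the strict bound.
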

Lemma~\ref{lemma:shallow_exploration_trap} characterizes the \textbf{Shallow Exploration Trap}: the model is exponentially biased against the long trajectories required to reach reasoning states (Figure~\ref{fig:motivation}b).

\begin{remark}[\textbf{The Exploration-Length Conflict}] \label{remark:conflit}
  Together, Proposition~\ref{propos:length_as_capacity} and Lemma~\ref{lemma:shallow_exploration_trap} reveal a fundamental conflict for in-context exploration: \emph{long trajectories are necessary for broad state coverage, yet they are exponentially unlikely to be sampled}.
\end{remark}

\subsection{Empirical Validation in RL Training} \label{sec:analysis_baseline}
To further validate the relationship between response length, state coverage, and model performance (Proposition ~\ref{propos:length_as_capacity}), we conduct a pilot study using GRPO and GSPO.

\vspace{-1em}
\paragraph{Length correlates with coverage and performance.}
As illustrated in Figure~\ref{fig:baseline}, we observe a remarkably high Pearson correlation ($\rho (L, C_{\text{context}}) > 0.96 $) between Response Length and $C_{\text{context}}$.
This confirms our Proposition~\ref{propos:length_as_capacity}: extending the physical length of the trajectory is the primary mechanism by which the model expands its exploration volume.
Crucially, as the model accesses these longer sequences, reasoning accuracy improves.

\vspace{-1em}
\paragraph{The limitations of standard RL Training.} \label{sec:limitation}
However, our analysis reveals bottlenecks to incentivized in-context exploration in current RL training:
\begin{enumerate}[nosep, leftmargin=12pt]
  \item \textbf{Shallow exploration trap} (Lemma~\ref{lemma:shallow_exploration_trap}): In GRPO, performance plateaus along with the response length stabilization.
        While GSPO shows continuous length growth, it is prohibitively slow.
        Both algorithms eventually succumb to the ``Shallow Exploration Trap'', failing to push $L$ far enough to reach complex states.
  \item \textbf{Degrading state density:} As shown in Figure \ref{fig:baseline}, length increases, the \textit{Distinct $C_{\text{context}}$ Ratio} ($R_{\text{context}}$) tends to decrease.
        % This indicates that standard RL incentives, while increasing length, without state density.
        This indicates that standard RL incentives increase trajectory length but fail to improve state density.
        The model fills the extra capacity with repetitive tokens rather than novel reasoning steps, degrading the efficiency of exploration.
\end{enumerate}
These limitations indicate that enhancing in-context exploration requires not only longer reasoning trajectories but also efficient and non-redundant exploration.

% \section{Methodology}
\section{Length-Incentivized Exploration} \label{sec:recipe}
In this section, we introduce a targeted reward shaping recipe --- Length-Incentivized Exploration(\method) to enforce in-context exploration in reinforcement learning.
% Based on the above theoretical and empirical insights, a natural question arises -- whether explicitly compelling the model to traverse longer reasoning sequences can break ``Shallow Exploration Trap''.
\subsection{The Length-Incentivized Reward}
We first introduce a Length-Incentivized Reward ($R_{\text{len}}$) designed to explicitly encourage the model to extend its reasoning trajectories beyond its initial policy's tendencies when it fails to reach the correct answer $R_{\text{acc} = 0}$.
% The total reward is formulated as:
% \looseness=-1
% \begin{equation}
%   R = R_{\text{acc}} + R_{\text{len}} \,.
% \end{equation}

We define a sample-wise target length $L_{\text{target}, i} = L_{\text{ref}, i} + \Delta L$, where $L_{\text{ref}, i}$ denotes the response length of the initial policy of  $i$-th sample and $\Delta L$ represents the desired least increment for exploration.
The length reward is defined as:
\begin{small}
  \begin{equation}
    \textstyle
    R_{\text{len}} =
    \begin{cases}
      0                              & L \ge L_{\text{target}}, R_{\text{acc}} = 0 \\
      - \eta (L_{\text{target}} - L) & L < L_{\text{target}}, R_{\text{acc}} = 0   \\
      0                              & R_{\text{acc}} = 1
    \end{cases} \,.
  \end{equation}
\end{small}%
This formulation creates a curriculum: the model is rewarded for extending its reasoning process beyond its usual depth whenever it cannot solve a problem immediately.

\subsection{Redundancy Reward: Enforcing Effective State Coverage}
While $R_{\text{len}}$ increases the exploration capacity $L$, it does not guarantee that this capacity is converted into effective state coverage $C_{\text{context}}$ (as revealed in Section ~\ref{sec:limitation}). Recall Proposition ~\ref{propos:length_as_capacity}: $C_{\text{context}}(\tau) \le L$.
The equality holds only when redundancy is minimized.
\looseness=-1
% To prevent the model from hacking the length reward via degenerate loops, we introduce a repetition penalty $R_{\text{rep}}$ that acts as a consistency constraint.

\vspace{-1em}
\paragraph{Redundancy Reward.}
We adopt a hard-threshold approximation of redundant in-context states that is in line with the widely used repetition penalty~\cite{yu2025rlpr}.
Let $\mathcal{N}_\tau(\phi(s_t))$ be the visitation count of state abstraction $\phi(s_t)$ in the current trajectory. The reward is defined as:
\begin{equation}
  R_{\text{red}} = -\beta \cdot \prod \mathbb{I}\left[ \mathcal{N}_\tau(\phi(s_t)) > \Theta \right],
  \label{eq:rep_penalty}
\end{equation}
where $\Theta$ is the threshold and $\beta$ is the penalty magnitude.
\looseness=-1
\paragraph{Theoretical Consistency.}
This formulation aligns with the count-based exploration bonus $b_t \propto 1/\sqrt{N_t}$ (Section \ref{sec:count-based exploration}).
% by treating states with excessive visitation ($N_t \gg 1$) as having negligible or negative utility.
Maximizing $R_{\text{len}} + R_{\text{red}}$ is equivalent to maximizing state coverage $C_{\text{context}}$, in a two-step manner.
The length reward elevates the upper bound of states, and the redundancy penalty matches the number of states given length.
\looseness=-1
% Consider the objective of maximizing the cumulative reward over a trajectory of length $L$:
% \begin{equation}
%     \max_{\tau} J(\tau) = \sum_{t=1}^{L} \left( r_{\text{len}} - \beta \cdot \mathbb{I}[s_t \in \mathcal{S}_{\text{visited}}] \right).
% \end{equation}
% Since $r_{\text{len}}$ is constant for valid tokens, maximizing $J(\tau)$ requires minimizing the penalty term $\sum \mathbb{I}[s_t \in \mathcal{S}_{\text{visited}}]$. This minimization effectively forces the policy to select $s_t$ such that $s_t \notin \tau_{<t}$, thereby maximizing the distinct state count $C_{\text{context}} \to L$. Thus, $R_{\text{rep}}$ serves as a constraint that couples trajectory lengthening directly with exploration breadth.
\vspace{-1em}
\paragraph{Final recipe.}
To prioritize solution accuracy while encouraging test-time scaling, we shape the reward to favor correct responses and longer reasoning trajectories, while penalizing redundant exploration:
\begin{equation}
  R = R_{\text{acc}} + R_{\text{len}} + \beta \cdot R_{\text{red}} \,,
  \label{eq:reward_recipe}
\end{equation}

\section{Experiments}
In this section, we empirically validate the effectiveness of \method.
We conduct comprehensive experiments to address the following core research questions:
\begin{itemize}[nosep, leftmargin=12pt]
  \item \textbf{RQ1: performance \& test-time scaling.} Does our recipe mitigate the "Shallow Exploration Trap" to achieve more effective test-time scaling?
  \item \textbf{RQ2: impact of model capability.} How does the effectiveness of our recipe on the different policy models' initial state?
  \item \textbf{RQ3: exploration dynamics.} How does the length-incentivized recipe impact global state diversity and exploration during training?
\end{itemize}

\subsection{Experimental Setup}
\paragraph{Evaluation.}
We assess performance across eight reasoning benchmarks, categorized into in-domain mathematical tasks and out-of-distribution (OOD) general reasoning.
The in-domain suite comprises AIME 2024/2025, AMC \citep{li2024numinamath}, MATH-500 \citep{hendrycks2021measuring}, and OlympiadBench \citep{he2024olympiadbench}.
The OOD evaluation includes ARC-c \citep{clark2018think}, GPQA-Diamond (denoted as GPQA* \citep{rein2024gpqa}), and MMLU-Pro \citep{wang2024mmlu}.
Regarding metrics, for benchmarks with limited sample sizes (AIME and AMC), we report the average accuracy over 32 independent runs (Avg@32).
For all other benchmarks, we report Pass@1.
All evaluations are conducted with a sampling temperature of 0.6, top-p=1.0, and a maximum response length of 32k tokens, which surpasses the training budget.

\vspace{-1em}
\paragraph{RLVR setups.}
Training is performed with a prompt batch size of 128, generating 8 rollouts per prompt.
We update the policy using a mini-batch size of 32 and a learning rate of 1e-6.
During the training phase, the sampling temperature is set to 1.0, with a maximum response length of 8,192 tokens.
Experiments are implemented using the verl framework\footnote{https://github.com/volcengine/verl} on nodes equipped with 4×H100 GPUs, employing Math-Verify\footnote{https://github.com/huggingface/Math-Verify} for outcome-based reward calculation.
We default set $n=10$, $\eta=0.3/9000$, $\beta=0.6$ and $\Theta=10$ in reward.
We discussed these hyperparameters in Appendix \ref{appendix:ablation}.

\paragraph{Models and baselines.}
We designate Qwen3-4B-Base as our primary testbed.
To verify the universality of our recipe across different training stages and model families, we also extend our evaluation to Qwen3-4B (non-thinking) and Llama-OctoThinker-3B-Base~\citep{wang2025octothinker}.
The training datasets are selected to align with the respective models: DAPO-Math-17k \citep{yu2025dapo} for Qwen3-4B-Base, Polaris \citep{Polaris2025} for Qwen3-4B-Instruct, and DeepMath-5k \citep{he2025deepmath, tan2025bottom} for Llama-OctoThinker.
We primarily investigate GSPO as the core algorithm.
Additionally, for Qwen3-4B-Base, we benchmark against standard GRPO and a GRPO variant with a high clip ratio (0.28) to validate our recipe's effectiveness across different algorithms.

\begin{table*}[h]
  \caption{\textbf{In-Domain and Out-of-Domain evaluation performance based on Qwen3-4B-Base.} \textbf{Bold} and \underline{underline} indicate the best and second-best results. Gains of our methods compared to corresponding baselines are marked in \textcolor{blue}{blue}.}
  \label{tab:main}
  \resizebox{\textwidth}{!}{%
    \begin{tabular}{lcccccccccc}
      \toprule
      \multirow{2}{*}{\textbf{Model}}       & \multicolumn{6}{c}{\textbf{In-Domain Performance}} & \multicolumn{4}{c}{\textbf{Out-of-Domain Performance}}                                                                                                                                                                                                                                     \\
                                            & \textbf{MATH}                                      & \textbf{Olympiad}                                      & \textbf{AMC}     & \textbf{AIME}    & \textbf{AIME25}  & \multicolumn{1}{c|}{\textbf{Avg.}}                               & \textbf{ARC-c}   & \textbf{GPQA}    & \textbf{MMLU-Pro} & \textbf{Avg.}                               \\ \midrule
      \textbf{Qwen3-4B-Base}                & 66.0                                               & 33.2                                                   & 36.6             & 8.5              & 6.9              & \multicolumn{1}{c|}{30.2}                                        & 66.9             & 26.3             & 30.9              & 41.4                                        \\ \midrule
      \textbf{GRPO}                         & 80.4                                               & 47.1                                                   & 55.2             & 16.8             & 18.7             & \multicolumn{1}{c|}{43.6}                                        & 84.6             & 44.4             & 60.1              & 63.0                                        \\
      \textbf{GRPO w/Clip-higher}           & 86.4                                               & \underline{54.1}                                       & 61.8             & 25.2             & 22.2             & \multicolumn{1}{c|}{49.9}                                        & 89.6             & 46.0             & 62.8              & \underline{66.1}                            \\
      \textbf{GSPO}                         & 85.2                                               & 51.7                                                   & 62.7             & 26.7             & 20.5             & \multicolumn{1}{c|}{49.4}                                        & 88.4             & \textbf{48.5}    & 61.5              & \underline{66.1}                            \\ \midrule

      \textbf{GRPO + \method}               & 85.0                                               & 49.9                                                   & 60.5             & 22.9             & 16.4             & \multicolumn{1}{c|}{$46.9_{\textcolor{blue}{+3.3}}$}             & 90.3             & 46.5             & 60.4              & $65.7_{\textcolor{blue}{+2.7}}$             \\

      \textbf{GRPO w/Clip-higher + \method} & \textbf{88.8}                                      & \underline{54.1}                                       & \underline{65.5} & \underline{30.4} & \underline{24.5} & \multicolumn{1}{c|}{$\underline{52.6}_{\textcolor{blue}{+2.7}}$} & \textbf{91.5}    & 43.9             & \underline{63.0}  &  $\underline{{66.1}}_{\textcolor{blue}{+0.0}}$                            \\

      \textbf{GSPO + \method}               & \underline{88.4}                                   & \textbf{57.2}                                          & \textbf{66.2}    & \textbf{30.5}    & \textbf{26.7}    & \multicolumn{1}{c|}{$\mathbf{53.8}_{\textcolor{blue}{+4.4}}$}    & \underline{91.4} & \underline{47.5} & \textbf{63.8}     & $\mathbf{67.6}_{\textcolor{blue}{+1.5}}$    \\ \bottomrule
    \end{tabular}%
  }
\end{table*}

\subsection{Results}

% \vspace{-1em}
\paragraph{\method achieves superior performance across algorithms on Qwen3-4B-Base.}
As shown in Table~\ref{tab:main}, our recipe consistently outperforms baselines across all benchmarks. On Qwen3-4B-Base, applying our method to GSPO boosts average in-domain accuracy from 49.4\% to 53.8\% and OOD accuracy from 62.7\% to 67.1\%. Notably, we achieve a 6.2\% gain on the challenging AIME25, confirming that breaking the ``Shallow Exploration Trap'' effectively unlocks complex problem-solving capabilities.
Moreover, our recipe achieves consistent gains across different model sizes (as shown in Appendix~\ref{appendix:different_model_size}).
% Crucially, these gains are not limited to the training distribution; the significant boosts on ARC-c (+2.3\%) and MMLU-Pro (+1.6\%) indicate that the model learns generalizable search patterns rather than overfitting to specific math problems.

\vspace{-1em}
\paragraph{Effects of recipe on in-context state coverage.}
\begin{figure}[!t]
  \centering
  \includegraphics[width=\linewidth]{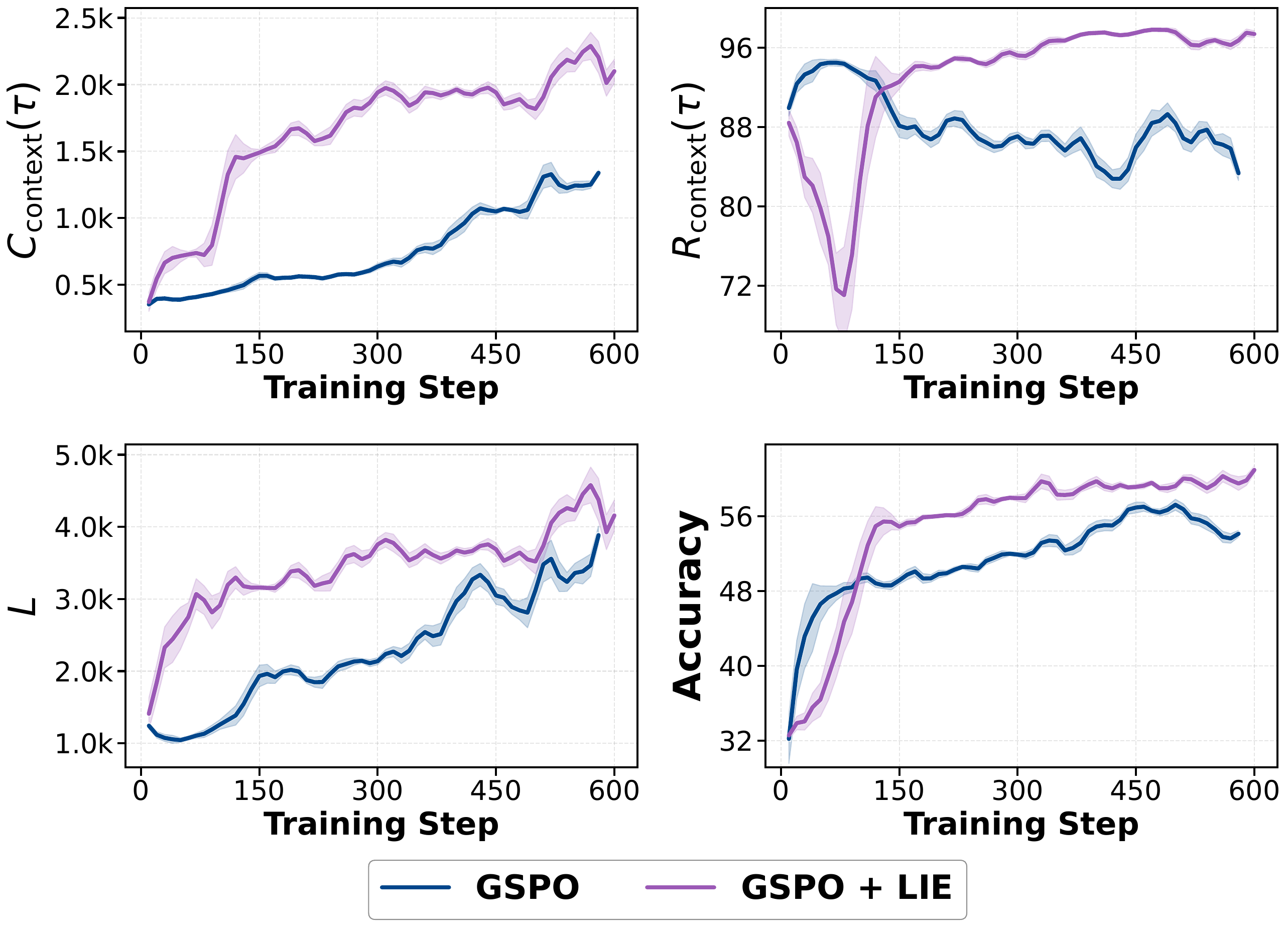}
  \caption{$C_{\text{context}}$, $R_{\text{context}}$, response length, and performance on the valid dataset comparing GSPO baseline and our recipe.}
  \label{fig:qwen3_4b_gspo_ours}
  \vspace{-1em}
\end{figure}
To understand the source of these gains, we analyze the training dynamics in Figure~\ref{fig:qwen3_4b_gspo_ours}.
Compared to the GSPO baseline, our method explicitly drives longer trajectories, resulting in a rapid expansion of $C_{\text{context}}$.
This confirms that our recipe incentive effectively increases the ``Capacity for Exploration'' (Proposition~\ref{propos:length_as_capacity}).
While the state density $R_{\text{context}}$ initially drops, the model eventually learns to utilize this extended budget, converting the increased state coverage into reasoning paths that correlate with the accuracy spike.

\vspace{-1em}
\paragraph{Test-time scaling via Long CoT.}
\begin{figure}[!t]
  \centering
  \includegraphics[width=\linewidth]{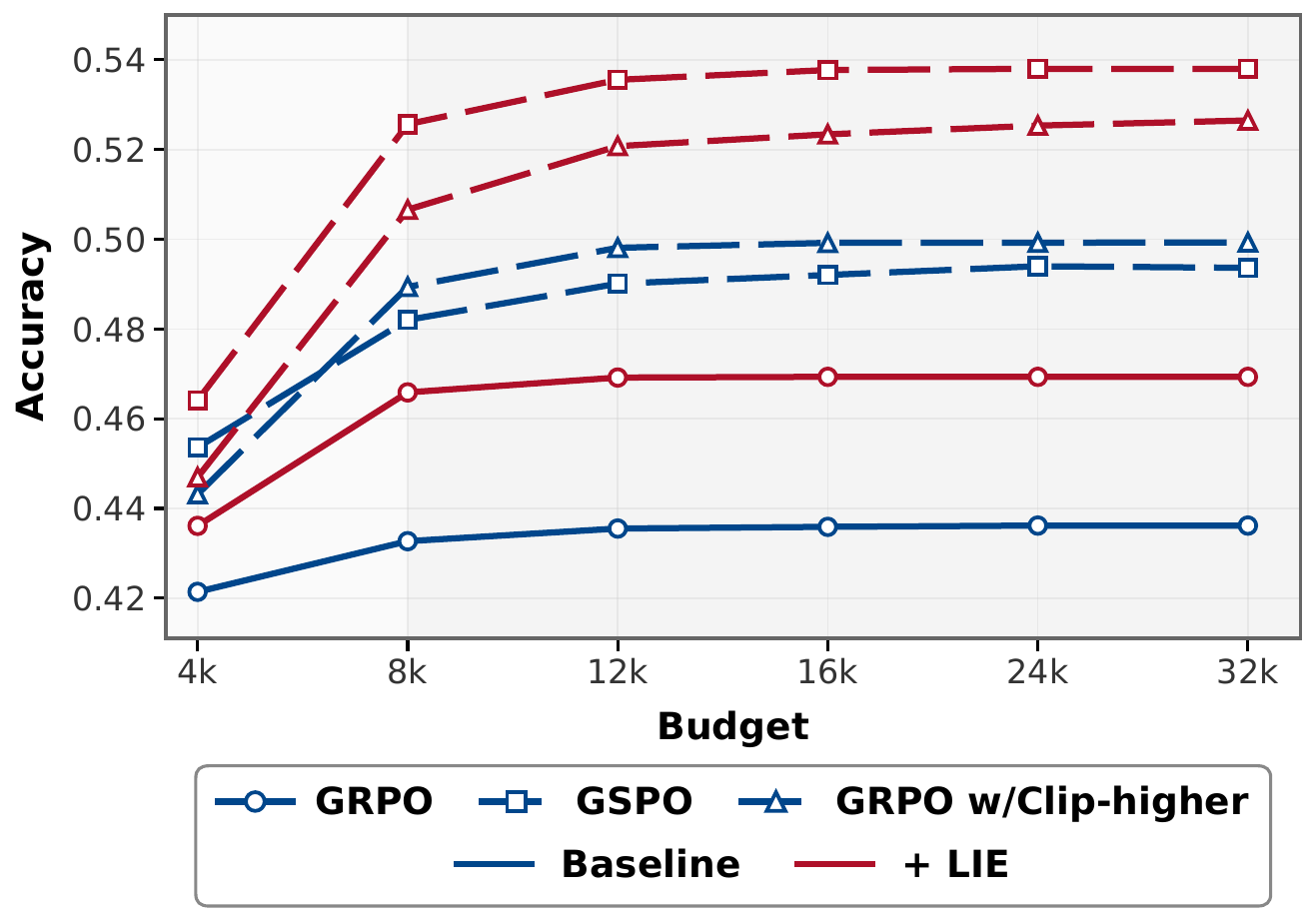}
  \caption{\textbf{Test-time extrapolation performance}. While standard baselines saturate or degrade when forced beyond their learned policy length, the Length-Incentivized Exploration recipe exhibits a superior scaling curve.}
  \label{fig:extrapolate}
  \vspace{-1em}
\end{figure}
Figure~\ref{fig:extrapolate} illustrates the performance trajectory as we increase the inference compute budget.
While standard RL models (Blue line) saturate or degrade when forced beyond their learned policy length, our recipe (Red line) exhibits a superior scaling curve, maintaining an upward trend.
This confirms that \method effectively utilizes the extended token budget to explore broader hypothesis spaces by improved in-context exploration.

\vspace{-1em}
\paragraph{\method is effective across different models.} \label{results:different_models}
\begin{table}[t]
  \caption{\textbf{In-Domain Evaluation performance based on Qwen3-4B and Llama-OctoThinker-3B.} \textbf{Bold} indicates the best result.
    Gains of our methods compared to corresponding baselines are marked in \textcolor{blue}{blue}.}
  \label{table:llama_qwen3_4b}
  \resizebox{\columnwidth}{!}{%
    \begin{tabular}{lcccccc}
      \toprule
      \multicolumn{1}{l}{}    & \textbf{MATH} & \textbf{Olympiad} & \textbf{AMC}  & \textbf{AIME} & \textbf{AIME25} & \textbf{Avg.} \\ \midrule
      \textbf{Qwen3-4B}       & 82.8          & 51.9              & 60.4          & 24.2          & 19.4            & 47.7          \\
      \textbf{GSPO}           & 94            & \textbf{68.1}     & 82            & 54.2          & 42.5            & 68.2          \\
      \textbf{GSPO + \method} & \textbf{94.4} & 67                & \textbf{85.3} & \textbf{57.7} & \textbf{46.4}   & $\mathbf{70.2}_{\textcolor{blue}{+2.0}}$ \\ \midrule
      \textbf{OctoThinker}    & 23.4          & 9.0               & 10.4          & 1.1           & 0.6             & 8.9           \\
      \textbf{GSPO}           & 55.8          & 23.1              & 28.2          & 3.8           & 2.3             & 22.6          \\
      \textbf{GSPO + \method} & \textbf{60.8} & \textbf{28.1}     & \textbf{30.3} & \textbf{4.5}  & \textbf{4.4}    & $\mathbf{25.6}_{\textcolor{blue}{+3.0}}$ \\ \bottomrule
    \end{tabular}%
  }
  % \vspace{-1em}
\end{table}

\begin{figure*}[!t]
  \centering
  \includegraphics[width=\linewidth]{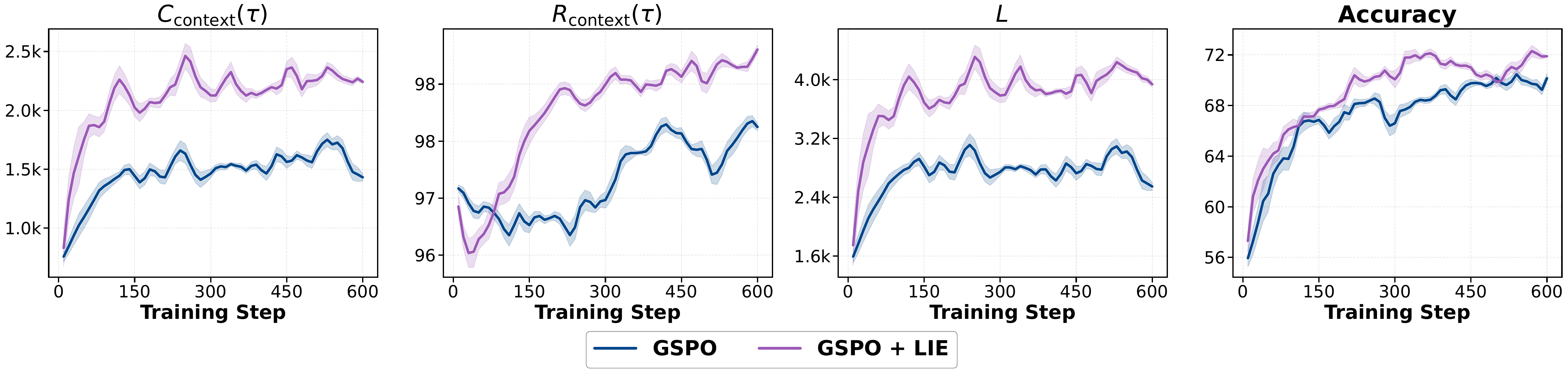}
  \caption{
    Tracking $C_{\text{context}}$, $R_{\text{context}}$, response length $L$, and valid performance during the training of Qwen3-4B.
    Compared to the GSPO baseline (\textcolor[HTML]{00468B}{Blue}),
    the LIE (\textcolor[HTML]{9B59B6}{Purple})
    successfully breaks the ``Shallow Exploration Trap'' and achieves performance gains.
  }
  \label{fig:qwen3_4b}
  \vspace{-1em}
\end{figure*}
We extended our recipe to Qwen3-4B (post-trained) and Llama-OctoThinker~\citep{wang2025octothinker}.
Table~\ref{table:llama_qwen3_4b} reports consistent gains (2\%--3\%) across models.
The training dynamics in Figures~\ref{fig:llama} and~\ref{fig:qwen3_4b} reveal that our recipe universally drives the expansion of state coverage, converting the increased trajectory length into improved in-context exploration.

\subsection{Analysis and Discussion}

\paragraph{Global diversity.}
\begin{figure}[h]
  \centering
  \includegraphics[width=\linewidth]{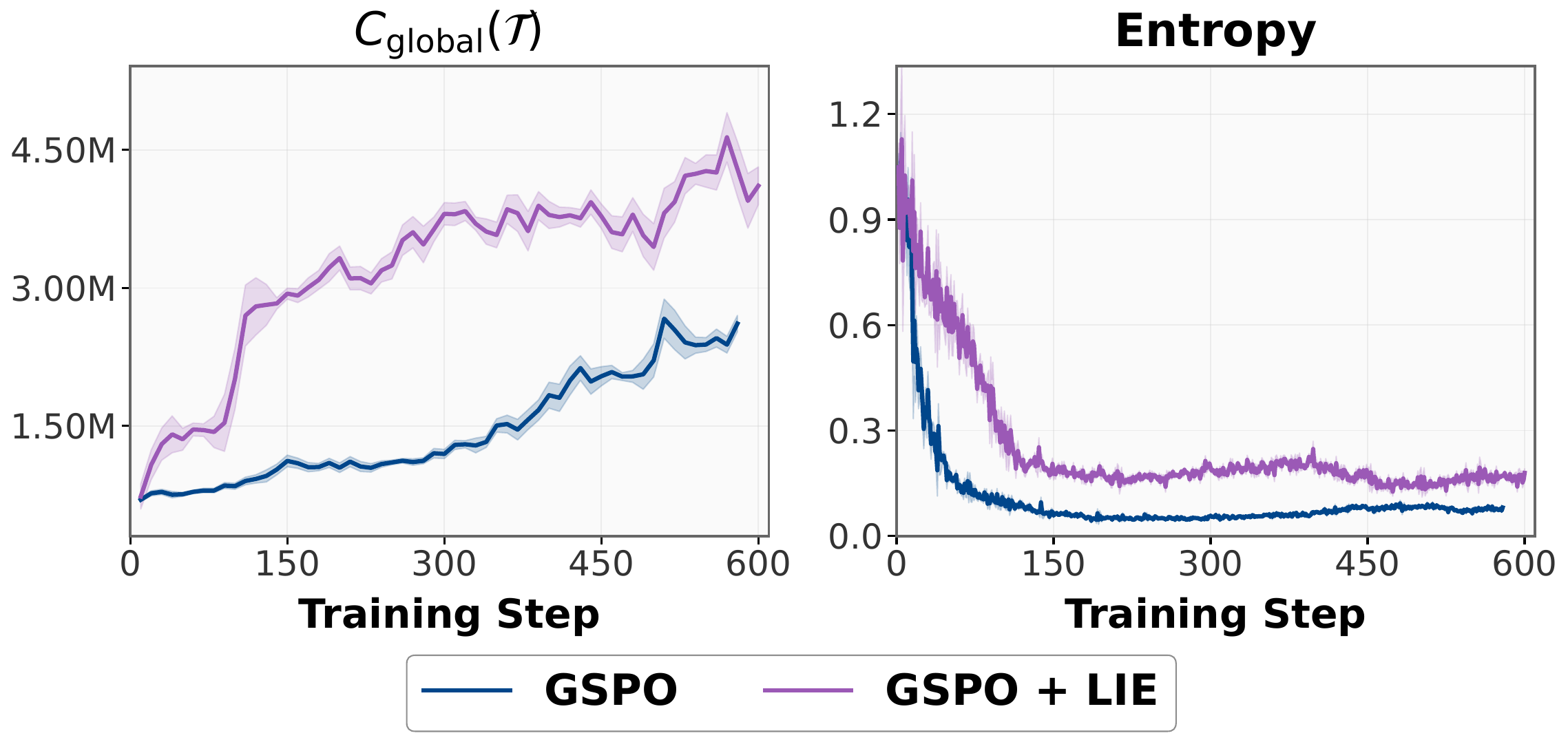}
  \caption{\textbf{Global Exploration Dynamics: Diversity and Entropy}. Our recipe maintains significantly higher entropy and continuous growth in global state coverage compared with the baseline.}
  \label{fig:global_diversity}
  \vspace{-1em}
\end{figure}

Beyond in-context state diversity, we track Global Distinct Count ($C_\text{global}$) and policy entropy to assess exploration across the state space during training time.
As shown in Figure~\ref{fig:global_diversity}, standard GSPO exhibits saturation and a rapid entropy drop, indicating premature convergence (mode collapse).
In contrast, our recipe maintains continuous growth in and significantly higher entropy.
This sustained global exploration prevents mode collapse, enabling the policy to discover rare, high-reward states during training that standard methods miss.

\vspace{-1em}
\paragraph{Continual scaling via curriculum training.}
\begin{table}[t]
  \caption{\textbf{Continual Scaling via Curriculum Training based on Qwen3-4B-Base.} ``Stage 1'' represents the initial training, while ``Stage 2'' employs further length extension. The results demonstrate consistent improvements across benchmarks.}
  \label{tab:continual}
  \resizebox{\columnwidth}{!}{%
    \begin{tabular}{lcccccc}
      \toprule
      \multicolumn{1}{l}{}   & \textbf{MATH} & \textbf{Olympiad} & \textbf{AMC} & \textbf{AIME} & \textbf{AIME25} & \textbf{Avg.}                   \\ \midrule
      \textbf{Qwen3-4B-Base} & 66.0          & 33.2              & 36.6         & 8.5           & 6.9             & 30.2                            \\
      \textbf{GSPO}          & 85.2          & 51.7              & 62.7         & 26.7          & 20.5            & 49.4                            \\
      \textbf{Stage1}        & 88.4          & 57.2              & 66.2         & 30.5          & 26.7            & $53.8_{\textcolor{blue}{+4.4}}$ \\
      \textbf{Stage2}        & 89.4          & 59.0                & 71.7         & 33.4          & 27.9            & $56.3_{\textcolor{blue}{+2.5}}$ \\ \bottomrule
    \end{tabular}%
  }
  \vspace{-1em}
\end{table}
To validate scaling laws beyond the initial setup, we conducted a second training stage (``Stage 2'') with a relaxed 12k token limit.
As shown in Table~\ref{tab:continual}, the model exhibits continuous improvement.
This confirms that our recipe effectively converts additional compute into reasoning accuracy, establishing a scaling trend with the allocated reasoning horizon.

\vspace{-1em}
\paragraph{Changes in Reasoning Behaviors.}
\begin{figure}
  \centering
  \includegraphics[width=0.86\linewidth]{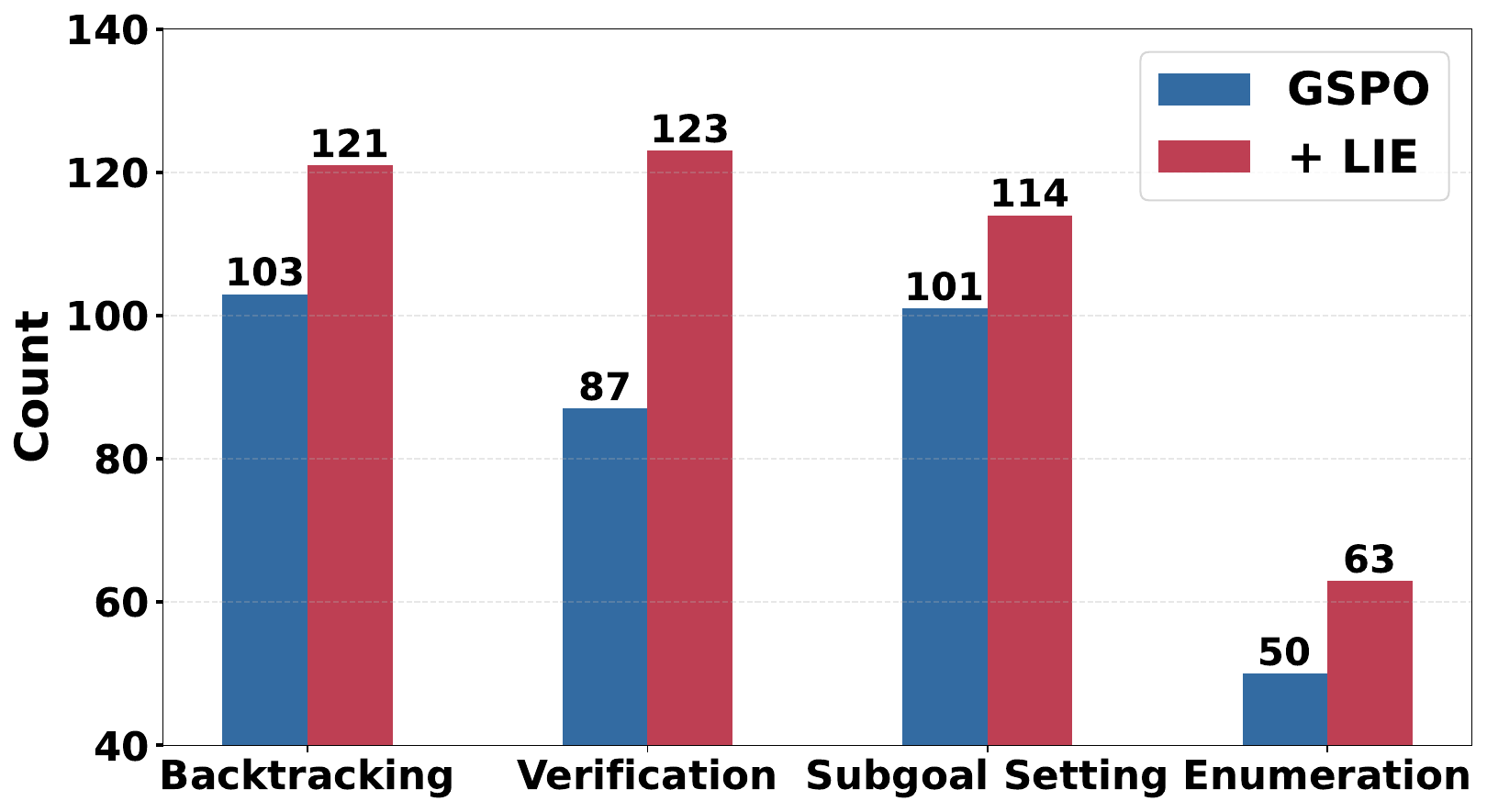}
  \caption{\textbf{Frequency analysis of cognitive behaviors}.
    The proposed recipe (+ LIE) significantly boosts all behaviors, with a notable enhancement in Backtracking.}
  \label{fig:cognitive_behaviors}
  \vspace{-1em}
\end{figure}
Adopting the framework from \citep{gandhi2025cognitive}, we employ Claude-3.7-Thinking to quantify four representative cognitive behaviors: Backtracking, Verification, Subgoal Setting, and Enumeration.
As illustrated in Figure ~\ref{fig:cognitive_behaviors}, the integration of LIE consistently stimulates more frequent reasoning activities across all categories.
Crucially, we observe a substantial increase in Backtracking (from 103 to 121), which is a sophisticated cognitive capability.
This improvement underscores that LIE effectively enhances the model's capacity for in-context exploration.
We also discuss the changes in reasoning structure in Appendix~\ref{appendix:case_study}.
\vspace{-1em}
\paragraph{Only $R_{\text{len}}$ incentives exploration but introduce repetition.}
As shown in Figure~\ref{fig:preliminary}, explicitly incentivizing longer trajectories rapidly expands $C_{\text{context}}$ and improves accuracy, confirming that length is capacity for additional in-context exploration (Proposition~\ref{propos:length_as_capacity}).
However, this expansion is accompanied by a decline in $R_{\text{context}}$ revealing that the model often satisfies the length reward through redundant rather than informative generation.
This observation necessitates a redundancy penalty to maintain exploration quality, a synergy further detailed in the ablation study of our reward components (Equation~\ref{eq:reward_recipe}) in Appendix~\ref{appendix:reward_ablation}.

% Together, these results suggest that length incentives alone are sufficient to overcome shallow exploration and unlock additional performance gains, but they do so inefficiently.
% Without further regularization, increased computation tends to translate into redundant exploration.

\begin{figure}[!t]
  \centering
  \includegraphics[width=\linewidth]{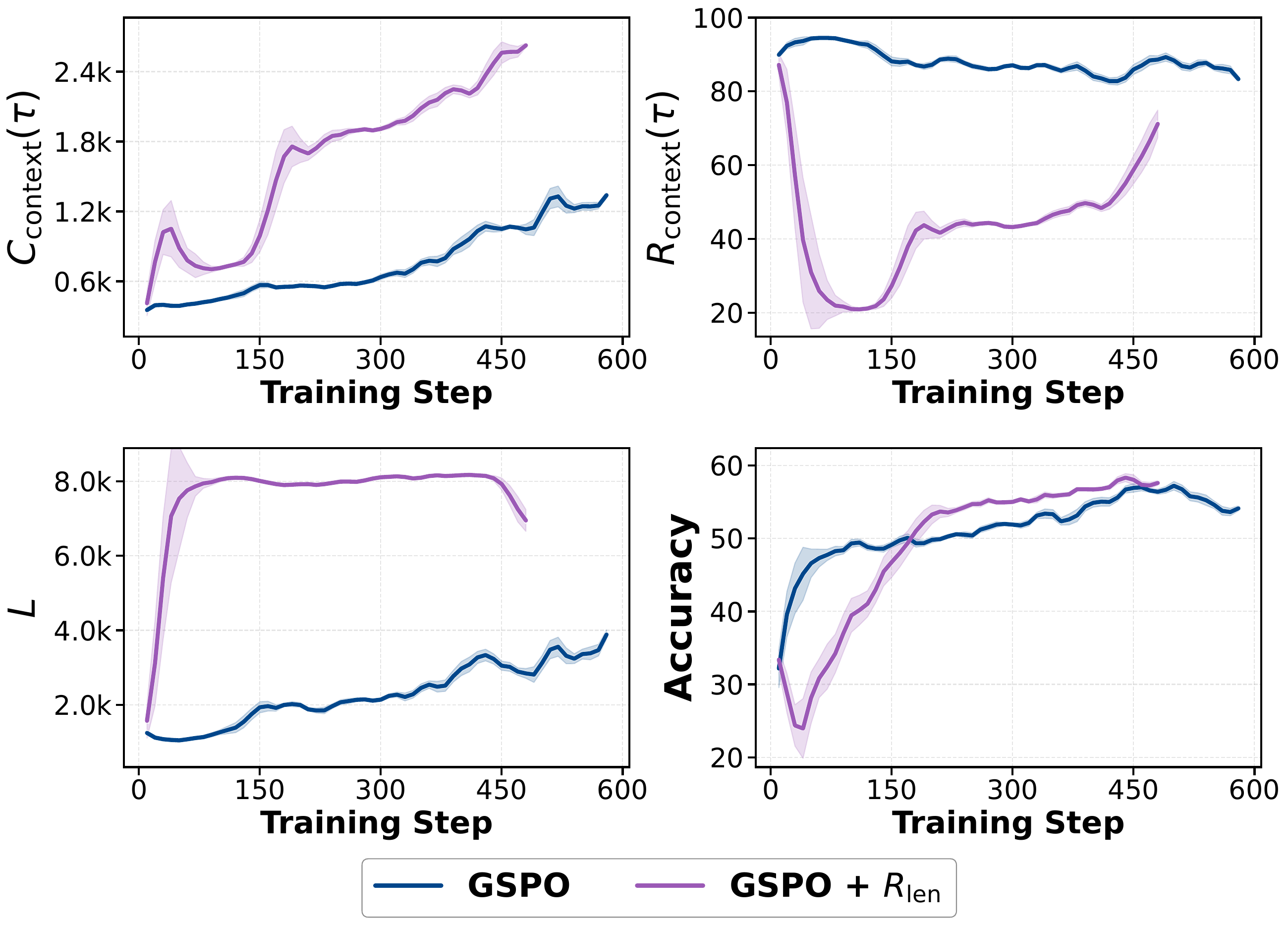}
  \caption{Impact of the Length-Incentivized Reward ($L_{\text{len}}$). Forcing the model to "think longer" yields performance gains. However, the accompanying drop in the distinct ratio $R_{\text{context}}$ reveals a tendency towards repetition.}
  \label{fig:preliminary}
  \vspace{-2em}
\end{figure}

\section{Conclusion}
In this work, we identified the conflict between the length required for broader state coverage and the ``Shallow Exploration Trap'' as a critical bottleneck in in-context exploration through theoretical and empirical analysis.
To overcome this limitation, we proposed Length-Incentivized Exploration, a simple yet effective recipe that maximizing the in-context state coverage.
Our experiments across diverse models and benchmarks confirm that \method incentivize the in-context exploration during test time.
Overall, our work advances test-time scaling by explicitly activating in-context exploration, demonstrating how additional computation can be systematically converted into more effective reasoning.

\section*{Impact Statements}
This paper presents work whose goal is to advance the field of machine learning.
There are many potential societal consequences of our work, none of which we feel must be specifically highlighted here.

% In the unusual situation where you want a paper to appear in the
% references without citing it in the main text, use \nocite
\nocite{langley00}

\bibliography{sources/reference}

@article{guo2025deepseek,
  title   = {Deepseek-r1: Incentivizing reasoning capability in llms via reinforcement learning},
  author  = {Guo, Daya and Yang, Dejian and Zhang, Haowei and Song, Junxiao and Zhang, Ruoyu and Xu, Runxin and Zhu, Qihao and Ma, Shirong and Wang, Peiyi and Bi, Xiao and others},
  journal = {arXiv preprint arXiv:2501.12948},
  year    = {2025}
}

@article{setlur2025e3,
  title   = {e3: Learning to Explore Enables Extrapolation of Test-Time Compute for LLMs},
  author  = {Setlur, Amrith and Yang, Matthew YR and Snell, Charlie and Greer, Jeremy and Wu, Ian and Smith, Virginia and Simchowitz, Max and Kumar, Aviral},
  journal = {arXiv preprint arXiv:2506.09026},
  year    = {2025}
}

@inproceedings{muennighoff2025s1,
  title     = {s1: Simple test-time scaling},
  author    = {Muennighoff, Niklas and Yang, Zitong and Shi, Weijia and Li, Xiang Lisa and Fei-Fei, Li and Hajishirzi, Hannaneh and Zettlemoyer, Luke and Liang, Percy and Cand{\`e}s, Emmanuel and Hashimoto, Tatsunori B},
  booktitle = {Proceedings of the 2025 Conference on Empirical Methods in Natural Language Processing},
  pages     = {20286--20332},
  year      = {2025}
}

@article{yang2025qwen3,
  title   = {Qwen3 technical report},
  author  = {Yang, An and Li, Anfeng and Yang, Baosong and Zhang, Beichen and Hui, Binyuan and Zheng, Bo and Yu, Bowen and Gao, Chang and Huang, Chengen and Lv, Chenxu and others},
  journal = {arXiv preprint arXiv:2505.09388},
  year    = {2025}
}

@article{kumar2024training,
  title   = {Training language models to self-correct via reinforcement learning},
  author  = {Kumar, Aviral and Zhuang, Vincent and Agarwal, Rishabh and Su, Yi and Co-Reyes, John D and Singh, Avi and Baumli, Kate and Iqbal, Shariq and Bishop, Colton and Roelofs, Rebecca and others},
  journal = {arXiv preprint arXiv:2409.12917},
  year    = {2024}
}

@misc{deepscaler2025,
  title        = {DeepScaleR: Surpassing 01-Preview with a 1.5B Model by Scaling RL},
  author       = {Michael Luo and Sijun Tan and Justin Wong and Xiaoxiang Shi and William Y. Tang and Manan Roongta and Colin Cai and Jeffrey Luo and Li Erran Li and Raluca Ada Popa and Ion Stoica},
  howpublished = {\url{https://pretty-radio-b75.notion.site/DeepScaleR-Surpassing-01-Preview-with-a-1-5B-Model-by-Scaling-RL-19681902c1468005bed8}},
  year         = {2025}
}

@article{gandhi2025cognitive,
  title   = {Cognitive behaviors that enable self-improving reasoners, or, four habits of highly effective stars},
  author  = {Gandhi, Kanishk and Chakravarthy, Ayush and Singh, Anikait and Lile, Nathan and Goodman, Noah D},
  journal = {arXiv preprint arXiv:2503.01307},
  year    = {2025}
}

@article{team2025kimi,
  title   = {Kimi k1. 5: Scaling reinforcement learning with llms},
  author  = {Team, Kimi and Du, Angang and Gao, Bofei and Xing, Bowei and Jiang, Changjiu and Chen, Cheng and Li, Cheng and Xiao, Chenjun and Du, Chenzhuang and Liao, Chonghua and others},
  journal = {arXiv preprint arXiv:2501.12599},
  year    = {2025}
}

@article{jaech2024openai,
  title   = {Openai o1 system card},
  author  = {Jaech, Aaron and Kalai, Adam and Lerer, Adam and Richardson, Adam and El-Kishky, Ahmed and Low, Aiden and Helyar, Alec and Madry, Aleksander and Beutel, Alex and Carney, Alex and others},
  journal = {arXiv preprint arXiv:2412.16720},
  year    = {2024}
}

@article{auer2002finite,
  title     = {Finite-time analysis of the multiarmed bandit problem},
  author    = {Auer, Peter and Cesa-Bianchi, Nicolo and Fischer, Paul},
  journal   = {Machine learning},
  volume    = {47},
  number    = {2},
  pages     = {235--256},
  year      = {2002},
  publisher = {Springer}
}

@article{snell2024scaling,
  title   = {Scaling llm test-time compute optimally can be more effective than scaling model parameters},
  author  = {Snell, Charlie and Lee, Jaehoon and Xu, Kelvin and Kumar, Aviral},
  journal = {arXiv preprint arXiv:2408.03314},
  year    = {2024}
}

@article{cui2025entropy,
  title   = {The entropy mechanism of reinforcement learning for reasoning language models},
  author  = {Cui, Ganqu and Zhang, Yuchen and Chen, Jiacheng and Yuan, Lifan and Wang, Zhi and Zuo, Yuxin and Li, Haozhan and Fan, Yuchen and Chen, Huayu and Chen, Weize and others},
  journal = {arXiv preprint arXiv:2505.22617},
  year    = {2025}
}

@article{wang2025beyond,
  title   = {Beyond the 80/20 rule: High-entropy minority tokens drive effective reinforcement learning for llm reasoning},
  author  = {Wang, Shenzhi and Yu, Le and Gao, Chang and Zheng, Chujie and Liu, Shixuan and Lu, Rui and Dang, Kai and Chen, Xionghui and Yang, Jianxin and Zhang, Zhenru and others},
  journal = {arXiv preprint arXiv:2506.01939},
  year    = {2025}
}

@article{zhu2025surprising,
  title   = {The surprising effectiveness of negative reinforcement in LLM reasoning},
  author  = {Zhu, Xinyu and Xia, Mengzhou and Wei, Zhepei and Chen, Wei-Lin and Chen, Danqi and Meng, Yu},
  journal = {arXiv preprint arXiv:2506.01347},
  year    = {2025}
}

@article{jiang2025think,
  title   = {Think only when you need with large hybrid-reasoning models},
  author  = {Jiang, Lingjie and Wu, Xun and Huang, Shaohan and Dong, Qingxiu and Chi, Zewen and Dong, Li and Zhang, Xingxing and Lv, Tengchao and Cui, Lei and Wei, Furu},
  journal = {arXiv preprint arXiv:2505.14631},
  year    = {2025}
}

@article{zhang2025alphaone,
  title   = {AlphaOne: Reasoning Models Thinking Slow and Fast at Test Time},
  author  = {Zhang, Junyu and Dong, Runpei and Wang, Han and Ning, Xuying and Geng, Haoran and Li, Peihao and He, Xialin and Bai, Yutong and Malik, Jitendra and Gupta, Saurabh and others},
  journal = {arXiv preprint arXiv:2505.24863},
  year    = {2025}
}

@article{wen2025budgetthinker,
  title   = {Budgetthinker: Empowering budget-aware llm reasoning with control tokens},
  author  = {Wen, Hao and Wu, Xinrui and Sun, Yi and Zhang, Feifei and Chen, Liye and Wang, Jie and Liu, Yunxin and Liu, Yunhao and Zhang, Ya-Qin and Li, Yuanchun},
  journal = {arXiv preprint arXiv:2508.17196},
  year    = {2025}
}

@article{aggarwal2025l1,
  title   = {L1: Controlling how long a reasoning model thinks with reinforcement learning},
  author  = {Aggarwal, Pranjal and Welleck, Sean},
  journal = {arXiv preprint arXiv:2503.04697},
  year    = {2025}
}

@article{huang2025adactrl,
  title   = {AdaCtrl: Towards Adaptive and Controllable Reasoning via Difficulty-Aware Budgeting},
  author  = {Huang, Shijue and Wang, Hongru and Zhong, Wanjun and Su, Zhaochen and Feng, Jiazhan and Cao, Bowen and Fung, Yi R},
  journal = {arXiv preprint arXiv:2505.18822},
  year    = {2025}
}

@article{zhang2025adaptthink,
  title   = {Adaptthink: Reasoning models can learn when to think},
  author  = {Zhang, Jiajie and Lin, Nianyi and Hou, Lei and Feng, Ling and Li, Juanzi},
  journal = {arXiv preprint arXiv:2505.13417},
  year    = {2025}
}

@inproceedings{kang2025c3ot,
  title     = {C3ot: Generating shorter chain-of-thought without compromising effectiveness},
  author    = {Kang, Yu and Sun, Xianghui and Chen, Liangyu and Zou, Wei},
  booktitle = {Proceedings of the AAAI Conference on Artificial Intelligence},
  volume    = {39},
  number    = {23},
  pages     = {24312--24320},
  year      = {2025}
}

@article{zhang2025othink,
  title   = {OThink-R1: Intrinsic Fast/Slow Thinking Mode Switching for Over-Reasoning Mitigation},
  author  = {Zhang, Shengjia and Wu, Junjie and Chen, Jiawei and Zhang, Changwang and Lou, Xingyu and Zhou, Wangchunshu and Zhou, Sheng and Wang, Can and Wang, Jun},
  journal = {arXiv preprint arXiv:2506.02397},
  year    = {2025}
}

@article{yu2025think,
  title   = {Think smarter not harder: Adaptive reasoning with inference aware optimization},
  author  = {Yu, Zishun and Xu, Tengyu and Jin, Di and Sankararaman, Karthik Abinav and He, Yun and Zhou, Wenxuan and Zeng, Zhouhao and Helenowski, Eryk and Zhu, Chen and Wang, Sinong and others},
  journal = {arXiv preprint arXiv:2501.17974},
  year    = {2025}
}

@article{xu2025scalable,
  title   = {Scalable chain of thoughts via elastic reasoning},
  author  = {Xu, Yuhui and Dong, Hanze and Wang, Lei and Sahoo, Doyen and Li, Junnan and Xiong, Caiming},
  journal = {arXiv preprint arXiv:2505.05315},
  year    = {2025}
}

@article{ma2025cot,
  title   = {Cot-valve: Length-compressible chain-of-thought tuning},
  author  = {Ma, Xinyin and Wan, Guangnian and Yu, Runpeng and Fang, Gongfan and Wang, Xinchao},
  journal = {arXiv preprint arXiv:2502.09601},
  year    = {2025}
}

@article{yeo2025demystifying,
  title   = {Demystifying long chain-of-thought reasoning in llms},
  author  = {Yeo, Edward and Tong, Yuxuan and Niu, Morry and Neubig, Graham and Yue, Xiang},
  journal = {arXiv preprint arXiv:2502.03373},
  year    = {2025}
}

@article{cheng2025reasoning,
  title   = {Reasoning with exploration: An entropy perspective},
  author  = {Cheng, Daixuan and Huang, Shaohan and Zhu, Xuekai and Dai, Bo and Zhao, Wayne Xin and Zhang, Zhenliang and Wei, Furu},
  journal = {arXiv preprint arXiv:2506.14758},
  year    = {2025}
}

@article{yang2025towards,
  title   = {Towards thinking-optimal scaling of test-time compute for llm reasoning},
  author  = {Yang, Wenkai and Ma, Shuming and Lin, Yankai and Wei, Furu},
  journal = {arXiv preprint arXiv:2502.18080},
  year    = {2025}
}

@article{yu2025dapo,
  title   = {Dapo: An open-source llm reinforcement learning system at scale},
  author  = {Yu, Qiying and Zhang, Zheng and Zhu, Ruofei and Yuan, Yufeng and Zuo, Xiaochen and Yue, Yu and Dai, Weinan and Fan, Tiantian and Liu, Gaohong and Liu, Lingjun and others},
  journal = {arXiv preprint arXiv:2503.14476},
  year    = {2025}
}

@article{hendrycks2021measuring,
  title   = {Measuring mathematical problem solving with the math dataset},
  author  = {Hendrycks, Dan and Burns, Collin and Kadavath, Saurav and Arora, Akul and Basart, Steven and Tang, Eric and Song, Dawn and Steinhardt, Jacob},
  journal = {arXiv preprint arXiv:2103.03874},
  year    = {2021}
}

@article{li2024numinamath,
  title   = {Numinamath: The largest public dataset in ai4maths with 860k pairs of competition math problems and solutions},
  author  = {Li, Jia and Beeching, Edward and Tunstall, Lewis and Lipkin, Ben and Soletskyi, Roman and Huang, Shengyi and Rasul, Kashif and Yu, Longhui and Jiang, Albert Q and Shen, Ziju and others},
  journal = {Hugging Face repository},
  volume  = {13},
  number  = {9},
  pages   = {9},
  year    = {2024}
}

@inproceedings{he2024olympiadbench,
  title     = {Olympiadbench: A challenging benchmark for promoting agi with olympiad-level bilingual multimodal scientific problems},
  author    = {He, Chaoqun and Luo, Renjie and Bai, Yuzhuo and Hu, Shengding and Thai, Zhen and Shen, Junhao and Hu, Jinyi and Han, Xu and Huang, Yujie and Zhang, Yuxiang and others},
  booktitle = {Proceedings of the 62nd Annual Meeting of the Association for Computational Linguistics (Volume 1: Long Papers)},
  pages     = {3828--3850},
  year      = {2024}
}

@article{gandhi2024stream,
  title   = {Stream of search (sos): Learning to search in language},
  author  = {Gandhi, Kanishk and Lee, Denise and Grand, Gabriel and Liu, Muxin and Cheng, Winson and Sharma, Archit and Goodman, Noah D},
  journal = {arXiv preprint arXiv:2404.03683},
  year    = {2024}
}

@article{wang2025octothinker,
  title   = {Octothinker: Mid-training incentivizes reinforcement learning scaling},
  author  = {Wang, Zengzhi and Zhou, Fan and Li, Xuefeng and Liu, Pengfei},
  journal = {arXiv preprint arXiv:2506.20512},
  year    = {2025}
}

@article{auer2002using,
  title   = {Using confidence bounds for exploitation-exploration trade-offs},
  author  = {Auer, Peter},
  journal = {Journal of machine learning research},
  volume  = {3},
  number  = {Nov},
  pages   = {397--422},
  year    = {2002}
}

@article{wu2024inference,
  title   = {Inference scaling laws: An empirical analysis of compute-optimal inference for problem-solving with language models},
  author  = {Wu, Yangzhen and Sun, Zhiqing and Li, Shanda and Welleck, Sean and Yang, Yiming},
  journal = {arXiv preprint arXiv:2408.00724},
  year    = {2024}
}

@article{liu2025can,
  title   = {Can 1b llm surpass 405b llm? rethinking compute-optimal test-time scaling},
  author  = {Liu, Runze and Gao, Junqi and Zhao, Jian and Zhang, Kaiyan and Li, Xiu and Qi, Biqing and Ouyang, Wanli and Zhou, Bowen},
  journal = {arXiv preprint arXiv:2502.06703},
  year    = {2025}
}

@article{brown2024large,
  title   = {Large language monkeys: Scaling inference compute with repeated sampling},
  author  = {Brown, Bradley and Juravsky, Jordan and Ehrlich, Ryan and Clark, Ronald and Le, Quoc V and R{\'e}, Christopher and Mirhoseini, Azalia},
  journal = {arXiv preprint arXiv:2407.21787},
  year    = {2024}
}

@article{wang2022self,
  title   = {Self-consistency improves chain of thought reasoning in language models},
  author  = {Wang, Xuezhi and Wei, Jason and Schuurmans, Dale and Le, Quoc and Chi, Ed and Narang, Sharan and Chowdhery, Aakanksha and Zhou, Denny},
  journal = {arXiv preprint arXiv:2203.11171},
  year    = {2022}
}

@article{agarwal2025gpt,
  title   = {gpt-oss-120b \& gpt-oss-20b model card},
  author  = {Agarwal, Sandhini and Ahmad, Lama and Ai, Jason and Altman, Sam and Applebaum, Andy and Arbus, Edwin and Arora, Rahul K and Bai, Yu and Baker, Bowen and Bao, Haiming and others},
  journal = {arXiv preprint arXiv:2508.10925},
  year    = {2025}
}

@inproceedings{lightman2023let,
  title     = {Let's verify step by step},
  author    = {Lightman, Hunter and Kosaraju, Vineet and Burda, Yuri and Edwards, Harrison and Baker, Bowen and Lee, Teddy and Leike, Jan and Schulman, John and Sutskever, Ilya and Cobbe, Karl},
  booktitle = {The Twelfth International Conference on Learning Representations},
  year      = {2023}
}

@article{yu2025rlpr,
  title   = {RLPR: Extrapolating RLVR to General Domains without Verifiers},
  author  = {Yu, Tianyu and Ji, Bo and Wang, Shouli and Yao, Shu and Wang, Zefan and Cui, Ganqu and Yuan, Lifan and Ding, Ning and Yao, Yuan and Liu, Zhiyuan and others},
  journal = {arXiv preprint arXiv:2506.18254},
  year    = {2025}
}

@article{zheng2025group,
  title   = {Group sequence policy optimization},
  author  = {Zheng, Chujie and Liu, Shixuan and Li, Mingze and Chen, Xiong-Hui and Yu, Bowen and Gao, Chang and Dang, Kai and Liu, Yuqiong and Men, Rui and Yang, An and others},
  journal = {arXiv preprint arXiv:2507.18071},
  year    = {2025}
}

@article{sutton1999policy,
  title   = {Policy gradient methods for reinforcement learning with function approximation},
  author  = {Sutton, Richard S and McAllester, David and Singh, Satinder and Mansour, Yishay},
  journal = {Advances in neural information processing systems},
  volume  = {12},
  year    = {1999}
}

@article{shao2024deepseekmath,
  title   = {Deepseekmath: Pushing the limits of mathematical reasoning in open language models},
  author  = {Shao, Zhihong and Wang, Peiyi and Zhu, Qihao and Xu, Runxin and Song, Junxiao and Bi, Xiao and Zhang, Haowei and Zhang, Mingchuan and Li, YK and Wu, Yang and others},
  journal = {arXiv preprint arXiv:2402.03300},
  year    = {2024}
}

@article{clark2018think,
  title   = {Think you have solved question answering? try arc, the ai2 reasoning challenge},
  author  = {Clark, Peter and Cowhey, Isaac and Etzioni, Oren and Khot, Tushar and Sabharwal, Ashish and Schoenick, Carissa and Tafjord, Oyvind},
  journal = {arXiv preprint arXiv:1803.05457},
  year    = {2018}
}

@inproceedings{rein2024gpqa,
  title     = {Gpqa: A graduate-level google-proof q\&a benchmark},
  author    = {Rein, David and Hou, Betty Li and Stickland, Asa Cooper and Petty, Jackson and Pang, Richard Yuanzhe and Dirani, Julien and Michael, Julian and Bowman, Samuel R},
  booktitle = {First Conference on Language Modeling},
  year      = {2024}
}

@article{wang2024mmlu,
  title   = {Mmlu-pro: A more robust and challenging multi-task language understanding benchmark},
  author  = {Wang, Yubo and Ma, Xueguang and Zhang, Ge and Ni, Yuansheng and Chandra, Abhranil and Guo, Shiguang and Ren, Weiming and Arulraj, Aaran and He, Xuan and Jiang, Ziyan and others},
  journal = {Advances in Neural Information Processing Systems},
  volume  = {37},
  pages   = {95266--95290},
  year    = {2024}
}

@misc{Polaris2025,
  title  = {POLARIS: A Post-Training Recipe for Scaling Reinforcement Learning on Advanced Reasoning Models},
  url    = {https://hkunlp.github.io/blog/2025/Polaris},
  author = {An, Chenxin and Xie, Zhihui and Li, Xiaonan and Li, Lei and Zhang, Jun and Gong, Shansan and Zhong, Ming and Xu, Jingjing and Qiu, Xipeng and Wang, Mingxuan and Kong, Lingpeng},
  year   = {2025}
}

@article{he2025deepmath,
  title   = {Deepmath-103k: A large-scale, challenging, decontaminated, and verifiable mathematical dataset for advancing reasoning},
  author  = {He, Zhiwei and Liang, Tian and Xu, Jiahao and Liu, Qiuzhi and Chen, Xingyu and Wang, Yue and Song, Linfeng and Yu, Dian and Liang, Zhenwen and Wang, Wenxuan and others},
  journal = {arXiv preprint arXiv:2504.11456},
  year    = {2025}
}

@article{tan2025bottom,
  title   = {Bottom-up Policy Optimization: Your Language Model Policy Secretly Contains Internal Policies},
  author  = {Tan, Yuqiao and Wang, Minzheng and He, Shizhu and Liao, Huanxuan and Zhao, Chengfeng and Lu, Qiunan and Liang, Tian and Zhao, Jun and Liu, Kang},
  journal = {arXiv preprint arXiv:2512.19673},
  year    = {2025}
}

@article{levy2014neural,
  title   = {Neural word embedding as implicit matrix factorization},
  author  = {Levy, Omer and Goldberg, Yoav},
  journal = {Advances in neural information processing systems},
  volume  = {27},
  year    = {2014}
}

@misc{liu2025learnreasonefficientlyadaptive,
  title         = {Learn to Reason Efficiently with Adaptive Length-based Reward Shaping},
  author        = {Wei Liu and Ruochen Zhou and Yiyun Deng and Yuzhen Huang and Junteng Liu and Yuntian Deng and Yizhe Zhang and Junxian He},
  year          = {2025},
  eprint        = {2505.15612},
  archiveprefix = {arXiv},
  primaryclass  = {cs.CL},
  url           = {https://arxiv.org/abs/2505.15612}
}

@article{schulman2017proximal,
  title   = {Proximal policy optimization algorithms},
  author  = {Schulman, John and Wolski, Filip and Dhariwal, Prafulla and Radford, Alec and Klimov, Oleg},
  journal = {arXiv preprint arXiv:1707.06347},
  year    = {2017}
}

@article{feng2025characterizes,
  title   = {What characterizes effective reasoning? revisiting length, review, and structure of cot},
  author  = {Feng, Yunzhen and Kempe, Julia and Zhang, Cheng and Jain, Parag and Hartshorn, Anthony},
  journal = {arXiv preprint arXiv:2509.19284},
  year    = {2025}
}

@article{meta2024llama,
  title   = {Llama 3.2: Revolutionizing edge ai and vision with open, customizable models},
  author  = {Meta, AI},
  journal = {Meta AI Blog. Retrieved December},
  volume  = {20},
  pages   = {2024},
  year    = {2024}
}

@incollection{goodhart1984problems,
  title     = {Problems of monetary management: the UK experience},
  author    = {Goodhart, Charles AE},
  booktitle = {Monetary theory and practice: The UK experience},
  pages     = {91--121},
  year      = {1984},
  publisher = {Springer}
}

@article{strehl2008analysis,
  title   = {An analysis of model-based interval estimation for Markov decision processes},
  author  = {Strehl, Alexander L and Littman, Michael L},
  journal = {Journal of Computer and System Sciences},
  volume  = {74},
  number  = {8},
  pages   = {1309--1331},
  year    = {2008}
}

@inproceedings{bellemare2016unifying,
  title     = {Unifying count-based exploration and intrinsic motivation},
  author    = {Bellemare, Marc and Srinivasan, Sriram and Ostrovski, Georg and Schaul, Tom and Saxton, David and Munos, Remi},
  booktitle = {Advances in Neural Information Processing Systems},
  volume    = {29},
  year      = {2016}
}

@inproceedings{tang2017exploration,
  title     = {\# Exploration: A study of count-based exploration for deep reinforcement learning},
  author    = {Tang, Haoran and Houthooft, Rein and Foote, Davis and Stooke, Adam and Xi Chen, OpenAI and Duan, Yan and Schulman, John and DeTurck, Filip and Abbeel, Pieter},
  booktitle = {Advances in Neural Information Processing Systems},
  volume    = {30},
  year      = {2017}
}

@inproceedings{kolter2009near,
  title     = {Near-{Bayesian} exploration in polynomial time},
  author    = {Kolter, J Zico and Ng, Andrew Y},
  booktitle = {Proceedings of International Conference on Machine Learning},
  pages     = {513--520},
  year      = {2009}
}
\bibliographystyle{configuration/icml2026}

%%%%%%%%%%%%%%%%%%%%%%%%%%%%%%%%%%%%%%%%%%%%%%%%%%%%%%%%%%%%%%%%%%%%%%%%%%%%%%%
%%%%%%%%%%%%%%%%%%%%%%%%%%%%%%%%%%%%%%%%%%%%%%%%%%%%%%%%%%%%%%%%%%%%%%%%%%%%%%%
% APPENDIX
%%%%%%%%%%%%%%%%%%%%%%%%%%%%%%%%%%%%%%%%%%%%%%%%%%%%%%%%%%%%%%%%%%%%%%%%%%%%%%%
%%%%%%%%%%%%%%%%%%%%%%%%%%%%%%%%%%%%%%%%%%%%%%%%%%%%%%%%%%%%%%%%%%%%%%%%%%%%%%%
\newpage
\appendix
\onecolumn
\tableofcontents

\section{Theoretical Foundation}
\subsection{Count-Based Exploration Theoretical Foundation} \label{appedix: count_based_exploration}
The exploration-exploitation tradeoff is a long-standing challenge in RL literature.
  This becomes more pronounced in LLM reasoning tasks due to the vast state-action space with reward sparsity, where existing methods often suffer from deficient exploration and poor sample efficiency.
  A classic, theoretically justified exploration principle is \textit{optimism in the face of uncertainty}, which augments the reward estimates of less-visited states/actions with an exploration bonus proportional to their uncertainty.
  In the minimal multi-armed bandit (MAB) setting, the well-known upper-confidence bound (UCB) algorithm~\citep{auer2002using} chooses the action $a_t$ according to:
  \begin{equation}\label{eq:ucb}
    a_t = \mathop{\arg\max}_{a\in\mathcal{A}} \hat{R}_t(a) + \sqrt{\frac{2\log t}{n_t(a)}},
  \end{equation}
  where $\hat{R}_t(a)$ and $n_t(a)$ is the reward estimate and visitation count for action $a$ at time $t$.
  Theorem~\ref{theo:mab} establishes the theoretical optimality of this \textit{count-based exploration} strategy, demonstrating that it yields a cumulative regret that grows only logarithmically over time.
  \begin{theorem}\label{theo:mab} \textup{\citep{auer2002using}}
    In an MAB setting, let $L(T)=\mathbb{E}[\sum_{t=1}^T\left(R^*-R(a_t)\right)]$ denote the total regret over $T$ steps, where $R(a)=\mathbb{E}_{\mathcal{R}^a}[R]$ is the expected reward for any action $a$ and $R^*$ is the reward for the optimal action.
    Let $\Delta_a=R^*-R(a)$ denote the reward gap between action $a$ and the optimum.
    For any bandit algorithm, the asymptotic total regret is at least logarithmic in the number of steps:
    \begin{equation}
      \lim_{T\to\infty}L(T)\ge \log T\cdot \sum_{a|\Delta_a>0}\frac{\Delta_a}{\text{KL}(\mathcal{R}^a||\mathcal{R}^{a^*})}.
    \end{equation}
    The UCB algorithm in Eq.~\ref{eq:ucb} achieves logarithmic asymptotic total regret:
    \begin{equation}
      \lim_{T\to\infty}L(T)  \le 8\log T \cdot\sum_{a|\Delta_a>0}\Delta_a.
    \end{equation}
  \end{theorem}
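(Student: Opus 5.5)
The argument rests on the standard regret decomposition. Writing $n_T(a)$ for the number of pulls of arm $a$ up to time $T$, linearity of expectation gives
\[
L(T)=\sum_{a\mid\Delta_a>0}\Delta_a\,\mathbb{E}[n_T(a)] .
\]
Both halves of the theorem therefore reduce to controlling $\mathbb{E}[n_T(a)]$ for each suboptimal arm: from above for UCB, and from below for any algorithm. Throughout I will assume, as in \citet{auer2002finite}, that rewards lie in $[0,1]$. I will also read ``$\lim_{T\to\infty}L(T)\le\ldots$'' as an asymptotic (limsup/liminf) statement about $L(T)/\log T$, since $L(T)$ itself diverges.

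\textbf{Upper bound for UCB.} Fix a suboptimal arm $a$ and let $\ell=\lceil 8\log T/\Delta_a^2\rceil$. After arm $a$ has been pulled $\ell$ times, it can be selected at time $t$ only if one of three events occurs:
\begin{enumerate}[nosep, leftmargin=12pt]
  \item the optimal arm's index underestimates its mean, $\hat R_t(a^*)+\sqrt{2\log t/n_t(a^*)}\le R^*$;
  \item arm $a$'s estimate overshoots, $\hat R_t(a)\ge R(a)+\sqrt{2\log t/n_t(a)}$;
  \item the confidence width is still too large, $2\sqrt{2\log t/n_t(a)}>\Delta_a$.
\end{enumerate}
The choice of $\ell$ excludes the third event. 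Hoeffding's inequality bounds each of the first two, for fixed counts $s,s'$, by $t^{-4}$. A union bound over $s,s'\le t$ and summation over $t$ then gives
\[
\mathbb{E}[n_T(a)]\le \frac{8\log T}{\Delta_a^2}+1+\frac{\pi^2}{3}.
\]
Plugging this into the decomposition yields
\[
L(T)\le 8\log T\sum_a \frac{1}{\Delta_a}+\Bigl(1+\frac{\pi^2}{3}\Bigr)\sum_a\Delta_a .
\]
This is the form established by \citet{auer2002finite}. I will note that the correct gap dependence of the leading term is $\sum_a 1/\Delta_a$. The displayed $\sum_a\Delta_a$ is recovered only up to the instance-dependent constant relating these sums (for instance, when the gaps are bounded below), so I will state the result in the $1/\Delta_a$ form and then remark on this discrepancy.

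\textbf{Lower bound (Lai--Robbins).} I restrict to \emph{consistent} algorithms, meaning $L(T)=o(T^\alpha)$ for every $\alpha>0$ on every instance; without some such restriction the bound is false, since an algorithm that always plays $a^*$ has zero regret on that instance. The proof is a change of measure. I take an alternative instance $\nu'$ that agrees with $\nu$ except that $\mathcal R^a$ is perturbed so that $a$ becomes optimal, with $\mathrm{KL}(\mathcal R^a\|\mathcal R'^a)\le \mathrm{KL}(\mathcal R^a\|\mathcal R^{a^*})+\delta$. The divergence decomposition then gives
\[
\mathrm{KL}(\mathbb P_\nu\|\mathbb P_{\nu'})=\mathbb{E}_\nu[n_T(a)]\,\mathrm{KL}(\mathcal R^a\|\mathcal R'^a).
\]
Applying the Bretagnolle--Huber inequality to the event $\{n_T(a)>T/2\}$, and using consistency on both instances to make that event unlikely under one and forced under the other, yields
\[
\liminf_{T\to\infty}\frac{\mathbb{E}_\nu[n_T(a)]}{\log T}\ge \frac{1}{\mathrm{KL}(\mathcal R^a\|\mathcal R'^a)} .
\]
Letting $\delta\to 0$ and summing against $\Delta_a$ gives the stated lower bound.

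The main obstacle is constructing the perturbed instance $\nu'$ with KL arbitrarily close to $\mathrm{KL}(\mathcal R^a\|\mathcal R^{a^*})$. This requires a continuity and richness assumption on the reward family. I would first handle one-parameter exponential families (e.g.\ Bernoulli), where the shift is explicit, and state the general case under that assumption. The upper-bound half is routine.
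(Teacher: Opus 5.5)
Your proposal is correct. The paper gives no proof of its own and only cites the result, so your argument is simply the standard one behind that citation: the Auer--Cesa-Bianchi--Fischer analysis of UCB1 for the upper half, and the Lai--Robbins change-of-measure bound for the lower half. You are right that the lower half needs the consistency restriction, and that both halves must be read as $\liminf$/$\limsup$ statements about $L(T)/\log T$.

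The one thing to sharpen is your remark on the upper half. The displayed $8\log T\sum_a\Delta_a$ is not just a looser rewriting of the $\sum_a 1/\Delta_a$ bound; with the stated constant it is false. UCB1 is consistent on Bernoulli instances, so apply your own lower bound to two Bernoulli arms with means $\tfrac{1}{2}\pm\tfrac{\Delta}{2}$. It gives $\liminf_{T\to\infty}L(T)/\log T\ge 1/\log\frac{1+\Delta}{1-\Delta}$, which exceeds $8\Delta$ whenever $\Delta\le\tfrac{1}{5}$. So the two halves of the stated theorem contradict each other, and you should present the $\sum_a 1/\Delta_a$ form as a correction of the statement rather than as a proof of it.
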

  Subsequent studies~\citep{bellemare2016unifying,tang2017exploration} extend this principle to MDPs by counting state-action pairs $n(s,a)$ and adding a bonus reward to encourage exploring less-visited pairs as
  \begin{equation}\label{eq:sa_count}
    R'(s,a) = R(s,a) + \frac{\beta}{\sqrt{n(s,a)}},
  \end{equation}
  which is shown to be formally near-optimal within the probably approximately correct (PAC-MDP) framework~\citep{strehl2008analysis,kolter2009near}.

% \begin{figure}[h]
    
%   \centering
%   \includegraphics[width=\linewidth]{sources/figures/train/four_metrics_grid_10gram.pdf}
%   \caption{local count across train data }
%   \label{fig:baseline_count}

% \end{figure}

\subsection{Exponential Decay of Long Sequences} \label{appendix:lemma_proof}
Broadening the coverage of the state space naturally drives the model to discover rare yet correct reasoning chains, preventing premature convergence to local optima in sparse-reward landscapes and incentivizing reasoning capacities.
  However, naively counting $n(s,a)$ in a vast language space becomes computationally prohibitive and statistically inefficient.
  Instead, we employ the response length as a coarse-grained abstraction of the state space.
  Lemma~\ref{appendix_lemma:length} shows that the expected count of a response decays exponentially w.r.t. its length, resulting in a long-tailed distribution.
  \begin{lemma}[\textbf{Exponential Decay of Long Sequences}] \label{appendix_lemma:length}
    Let $\mathcal{S}_L$ denote the set of states (i.e., responses in LLM reasoning) with sentence length $L$, and $p(\mathcal{S}_L)$ denote the sampling probability for that state set.
    Then, there exists a constant $\epsilon\in (0,1)$, such that the sampling probability of $\mathcal{S}_L$ is upper-bounded by an exponentially decaying function of its length $L$:
    \begin{equation}
      p(\mathcal{S}_L) < (1-\epsilon)^{L-1}.
    \end{equation}
  \end{lemma}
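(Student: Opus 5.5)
The natural route is to read $p(\mathcal{S}_L)$ as the probability that an autoregressive rollout emits no EOS token during its first $L-1$ steps, that is, that it survives long enough to reach length $L$. The event that the response has length exactly $L$ is contained in this survival event, so an exponential bound on survival transfers directly. The key modelling assumption, which I will state explicitly, is a uniform lower bound on the termination probability: there is $\epsilon\in(0,1)$ such that
\[
\mpi_\theta(\mathrm{EOS}\mid s)\ \ge\ \epsilon \quad\text{for every reachable state } s.
\]
This holds automatically for a softmax policy with bounded logits over a finite vocabulary $\cV$. If the logits lie in $[-B,B]$, every token, including EOS, has probability at least $e^{-2B}/|\cV|$. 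We may therefore take $\epsilon=e^{-2B}/|\cV|$. Sampling temperature and a finite context window keep $B$ finite.

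With this in hand, I would apply the chain rule along the deterministic transitions $s_{t+1}=s_t\oplus a_t$. Summing over all non-EOS continuations gives
\[
p(\mathcal{S}_L)\le \Pr[a_1\neq \mathrm{EOS},\dots,a_{L-1}\neq\mathrm{EOS}]=\mathbb{E}\Bigl[\prod_{t=1}^{L-1}\bigl(1-\mpi_\theta(\mathrm{EOS}\mid s_t)\bigr)\Bigr]\le (1-\epsilon)^{L-1}.
\]
This is made rigorous by a short induction on $L$. Conditioning on the prefix up to step $t$, the conditional survival probability at that step is at most $1-\epsilon$, so the tower property multiplies the factors.

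To obtain a strict inequality, I would note that the set of length-$L$ responses additionally requires EOS at step $L$, whose probability is at most $1-\eta<1$ for the analogous lower bound $\eta>0$ on non-EOS tokens. This yields $p(\mathcal{S}_L)\le(1-\epsilon)^{L-1}(1-\eta)<(1-\epsilon)^{L-1}$. Alternatively, one can shrink $\epsilon$ slightly.

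The main obstacle is justifying the uniform EOS lower bound. It is a genuine assumption about the policy rather than a consequence of the MDP definition; a policy could in principle drive EOS probability to zero. I would try the bounded-logit softmax argument first and state it as the standing hypothesis. I would also remark that the constant $\epsilon$ it produces is tiny, so the lemma is qualitative. An infimum over states taken only along typical trajectories would give a sharper, though less clean, version.
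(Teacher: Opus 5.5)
Your proposal is correct and is essentially the paper's argument: a uniform lower bound $\epsilon$ on the EOS probability, the chain-rule product of the $L-1$ non-termination factors, and strictness because the EOS factor at step $L$ is below one. Yours is more careful on two points. The paper passes from the pointwise fact $p_L([\text{EOS}])\in(0,1)$ straight to a uniform $\epsilon$, whereas you rightly treat that as an assumption and support it with the bounded-logit softmax bound; you also handle the prefix dependence of the EOS probabilities explicitly by conditioning.
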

  \begin{proof}
    Given a generative LLM model, let $p_L([\text{EOS}])$ denote the sampling probability of the end token at step $L$.
    With the softmax function as the output probability in language models, we have $p_L([\text{EOS}])\in (0,1),~\forall L$.
    There exists a lower bound of sampling probability $\epsilon\in (0,1)$, such that $p_L([\text{EOS}]) \ge \epsilon,~\forall L$.
    In an autoregressive manner, the sampling probability of a response with length $L$ is:
    \begin{equation}\label{eq:sample_prob}
      \begin{aligned}
        p(\mathcal{S}_L) & = \prod_{i=1}^{L-1}\left(1-p_i([\text{EOS}])\right)\cdot p_L([\text{EOS}]) \\
                         & \le \left(1-\epsilon\right)^{L-1}\cdot p_L([\text{EOS}])                   \\
                         & < (1-\epsilon)^{L-1}.
      \end{aligned}
    \end{equation}
    This completes the proof.
  \end{proof}

\section{Experimental Details}
\subsection{Template Prompt}
We adopt the following template for all experiments involving Qwen models, building upon the Qwen3 template for Qwen3-4B-Base and the Qwen-Nothinking template for Qwen3.

\begin{tcolorbox}[
    title={Qwen3 Template}
]
<|im\_start|>user\\
\{problem\} Let's think step by step and output the final answer within \textbackslash boxed\{\}.\\
<|im\_end|>\\
<|im\_start|>assistant
\end{tcolorbox}

\begin{tcolorbox}[
    title={Qwen3-NoThinking Template}
]
<|im\_start|>user\\
\{problem\} Let's think step by step and output the final answer within \textbackslash boxed\{\}.\\
<|im\_end|>\\
<|im\_start|>assistant\\
<think>\\
\\
</think>
\end{tcolorbox}
For training the Llama-OctoThinker models, we adopt the original prompt in \cite{wang2025octothinker} to ensure performance.
\begin{tcolorbox}[
    title={OctoThinker Template}
]
A conversation between User and Assistant. The user asks a question, and the Assistant solves it. The assistant first thinks about the reasoning process in the mind and then provides the user with the answer. User: You must put your answer inside \textbackslash boxed\{\} and Your final answer will be extracted automatically by the \textbackslash boxed\{\} tag. \\
\{problem\} \\
Assistant:
\end{tcolorbox}

\subsection{Implementation of RL}
In this section, we describe the RL training setup in detail.
We implement GRPO and other baseline algorithms using the Verl framework.
Across all algorithms and model variants, we adopt a unified set of hyperparameters, as reported in Table \ref{tab:rl_hyperparameters}, and do not employ entropy regularization or KL-based losses.
\begin{table}[h]
    \centering
    \caption{RL Hyperparameters.}
    \label{tab:rl_hyperparameters}
    \vspace{2mm}
    \begin{tabular}{ll}
        \toprule
        \textbf{Hyperparameter} & \textbf{Value} \\
        \midrule
        Optimizer & AdamW \\
        Policy learning rate & $1\text{e}^{-6}$ \\
        Training batch size & 128 prompts \\
        Samples per prompt & 8 \\
        Mini-batch size & 32 prompts \\
        Policy updates per rollout & 16 \\
        Max prompt length & 1024 tokens \\
        Max response length & 8192 \\
        Rollout temperature & 1.0 \\
        \bottomrule
    \end{tabular}
\end{table}

\section{Additional Results}
\subsection{Pitfalls of Direct State Coverage Maximization} \label{appendix:distinct_reward}
As discussed in Section ~\ref{sec:problem}, a natural inclination for incentivizing in-context exploration is to directly use the In-Context Distinct State Count $C_{\text{context}}(\tau)$ as a reward signal.
However, our empirical analysis in Figure~\ref{fig:bonus_ablate} reveals critical failure modes: reward hacking.

When $C_{\text{context}}$ is directly maximized (as shown in the red curves), the model initially exhibits a sharp increase in state coverage.
However, this is quickly followed by a "collapse" where the model learns to generate semantically hollow but structurally diverse tokens (e.g., random character permutations or irrelevant LaTeX symbols) to artificially inflate the state count.
This results in a significant drop in reasoning accuracy.
In contrast, our LIE recipe (purple curves) uses length as a proxy for capacity and redundancy as a constraint, which leads to a more stable and meaningful expansion of the state space, ultimately translating into higher and more stable task performance.
\begin{figure}[h]
  \centering
  \includegraphics[width=\linewidth]{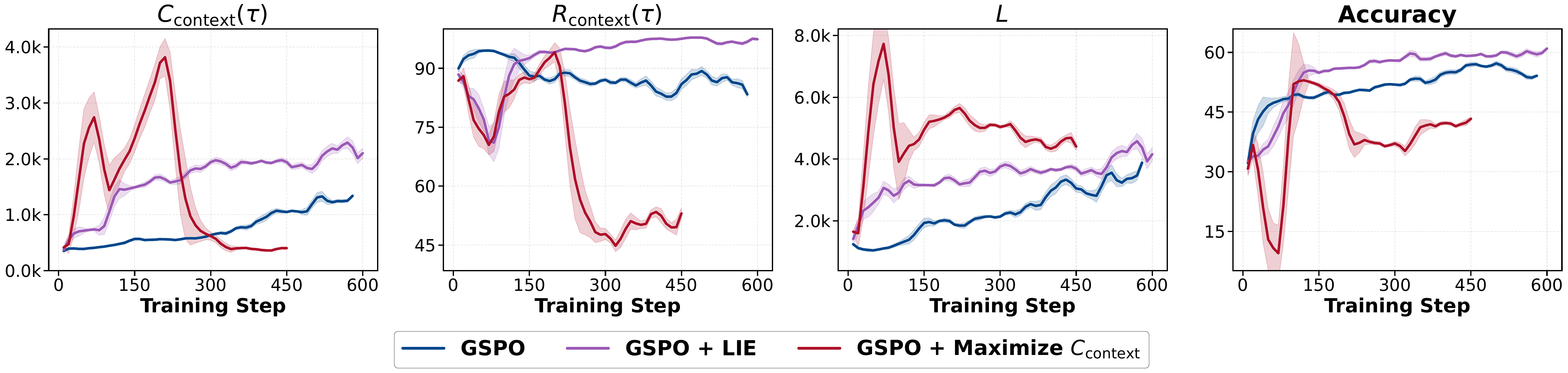}
  \caption{Training dynamics of maximizing $C_{\text{distinct}}$ as a reward.}
  \label{fig:bonus_ablate}
  \vspace{-1.2em}
\end{figure}

\subsection{Training dynamics of OctoThinker}
As discussed in Section ~\ref{results:different_models}, our Length-Incentivized (LIE) recipe demonstrates strong generalization across various model architectures.
Figure~\ref{fig:llama} illustrates the training dynamics of OctoThinker-3B-Base, comparing standard GSPO with our \method-enhanced approach. 
These observations reinforce that "thinking longer" via length incentivization is a reliable mechanism for performance extrapolation, regardless of the underlying policy model.
\begin{figure*}[h]
    \centering
    \includegraphics[width=\linewidth]{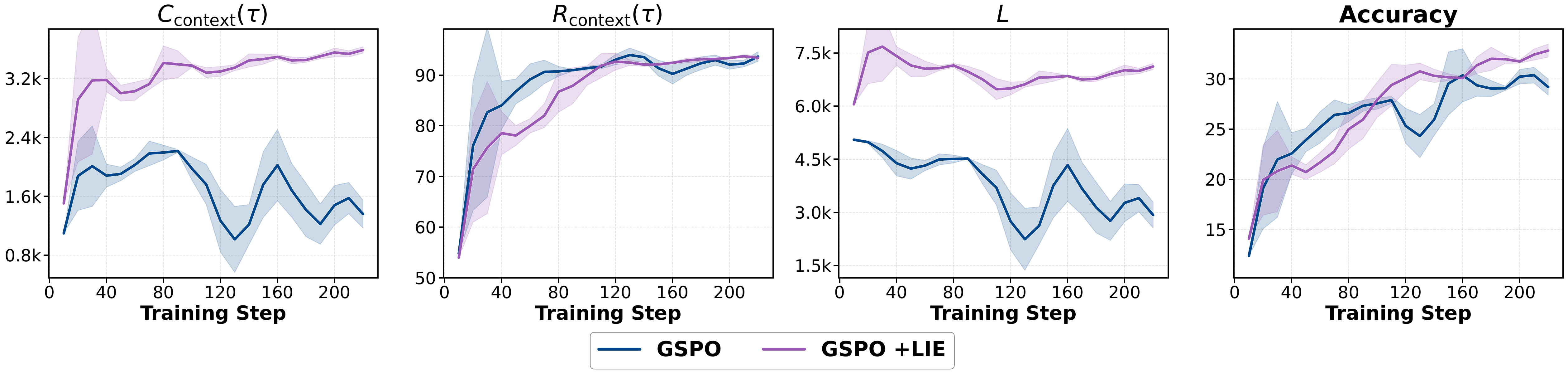}
    \caption{The training dynamics of OctoThinker-3B-Base.}
    \label{fig:llama}
\end{figure*}

\subsection{Scalability Across Model Scales} \label{appendix:different_model_size}
To verify the scalability of \method, we evaluate the recipe across different model sizes: Qwen3-1.7B-Base, 4B-Base, and 8B-Base.
As summarized in Table~\ref{tab:different_model_size}, \method consistently delivers performance gains regardless of the base model's capacity. 

Specifically, on the in-domain reasoning average, \method improves the 1.7B model by 7.6\%, the 4B model by 4.4\%, and the 8B model by 2.5\% compared to the GSPO baseline.
Notably, on out-of-distribution (OOD) tasks, the 8B model with LIE achieves a 73.4\% average accuracy, a 2.9\% improvement over the baseline.
These results demonstrate that the "Shallow Exploration Trap" is a universal bottleneck for LLM reasoning, and \method provides a robust mechanism to unlock deeper reasoning capabilities across the scaling spectrum.
\begin{table}[h]
  \caption{\method on different model sizes.}
  \label{tab:different_model_size}
  \resizebox{\columnwidth}{!}{%
  \begin{tabular}{ccccccc|cccc}
  \toprule
  \multirow{2}{*}{\textbf{Model}} & \multicolumn{6}{c|}{\textbf{In-Domain Performance}}                                                 & \multicolumn{4}{c}{\textbf{Out-of-Domain Performance}}             \\ 
                                  & \textbf{MATH} & \textbf{Olympiad} & \textbf{AMC}  & \textbf{AIME} & \textbf{AIME25} & \textbf{Avg.} & \textbf{ARC-c} & \textbf{GPQA} & \textbf{MMLU-Pro} & \textbf{Avg.} \\ \midrule
  \textbf{Qwen3-1.7B-Base}        & 51.2          & 20.7              & 25.8          & 3.4           & 1.7             & 20.6          & 54.1           & 20.2          & 27.5              & 33.9          \\
  \textbf{GSPO}                   & 70.6          & 32.3              & 38.4          & 8.1           & 3.8             & 30.6          & \textbf{79.4}  & 28.2          & 41.0              & 49.5          \\
  \textbf{+ \method}            & \textbf{77.0} & \textbf{39.6}     & \textbf{45.2} & \textbf{17.5} & \textbf{11.5}   & $\mathbf{38.2}_{\textcolor{blue}{+7.6}}$ & 79.1           & \textbf{33.8} & \textbf{45.6}     & $\mathbf{52.8}_{\textcolor{blue}{+3.3}}$ \\ \midrule
  \textbf{Qwen3-4B-Base}          & 66.0          & 33.2              & 36.6          & 8.5           & 6.9             & 30.2          & 66.9           & 26.3          & 30.9              & 41.4          \\
  \textbf{GSPO}                   & 85.2          & 51.7              & 62.7          & 26.7          & 20.5            & 49.4          & 88.4           & \textbf{48.5} & 61.5              & 66.1          \\
  \textbf{+ \method}            & \textbf{88.4} & \textbf{57.2}     & \textbf{66.2} & \textbf{30.5} & \textbf{26.7}   & $\mathbf{53.8}_{\textcolor{blue}{+4.4}}$ & \textbf{91.4}  & 47.5          & \textbf{63.8}     & $\mathbf{67.6}_{\textcolor{blue}{+1.5}}$ \\ \midrule
  \textbf{Qwen3-8B-Base}          & 67.8          & 35.3              & 38.9          & 10.3          & 8.5             & 32.2          & 58.5           & 32.3          & 51.2              & 47.3          \\
  \textbf{GSPO}                   & 89.8          & 58.8              & 72.9          & 34.4          & 25.8            & 56.3          & 93.2           & 50.0          & 68.3              & 70.5          \\
  \textbf{+ \method}            & \textbf{91.4} & \textbf{60.4}     & \textbf{73.4} & \textbf{37.2} & \textbf{31.6}   & $\mathbf{58.8}_{\textcolor{blue}{+2.5}}$ & \textbf{94.5}  & \textbf{55.1} & \textbf{70.7}     & $\mathbf{73.4}_{\textcolor{blue}{+2.9}}$ \\ \bottomrule
  \end{tabular}%
  }
  \end{table}

\subsection{Synergizing with SFT: Injection vs. Activation}
We further investigate the interplay between Supervised Fine-Tuning (SFT) and our \method.
We posit that these two stages are \textbf{orthogonal and complementary}: SFT serves as the \textit{injection} stage, embedding specific reasoning patterns and primitives into the model, while LIE serves as the \textit{activation} mechanism, compelling the model to compose these patterns into longer, more complex reasoning chains.

\textbf{Setup.} We constructed a curated SFT dataset by randomly sampling 4k problems from the training set and generating oracle Chain-of-Thought (CoT) data using GPT-OSS-120B~\citep{agarwal2025gpt}.
We fine-tuned the Qwen3-4B-Base model on this data for 3 epochs with a 4,096 token context limit.

\textbf{Results.}  The training dynamics in Figure~\ref{fig:sft} strongly support our hypothesis:

\begin{enumerate}
    \item \textbf{SFT provides the capability foundation (Injection).} Both SFT-initialized runs (Red and Purple) start with significantly higher state coverage ($C_{\text{context}}$) and accuracy compared to the Base model (Blue), confirming that SFT successfully injects necessary reasoning primitives.
    \item \textbf{\method drives further extrapolation (Activation).} While the standard SFT+GSPO baseline (Purple) shows improvement, it tends to plateau in trajectory length ($L \approx 5k$) and coverage, remaining tethered to the SFT data distribution. In contrast, applying LIE on top of SFT (Red) effectively "activates" the model to explore beyond the supervised horizon. The model utilizes the injected patterns to sustain a continued expansion in reasoning length ($L \rightarrow 7k+$) and distinct state coverage. 
    \item \textbf{Orthogonality.} The highest performance is achieved by \texttt{SFT GSPO + ILE} (Red). This demonstrates that our recipe functions as an independent scaling lever that synergizes with the strong priors from SFT to unlock the maximum reasoning potential.
\end{enumerate}
\begin{figure}[H]
\centering
\includegraphics[width=\linewidth]{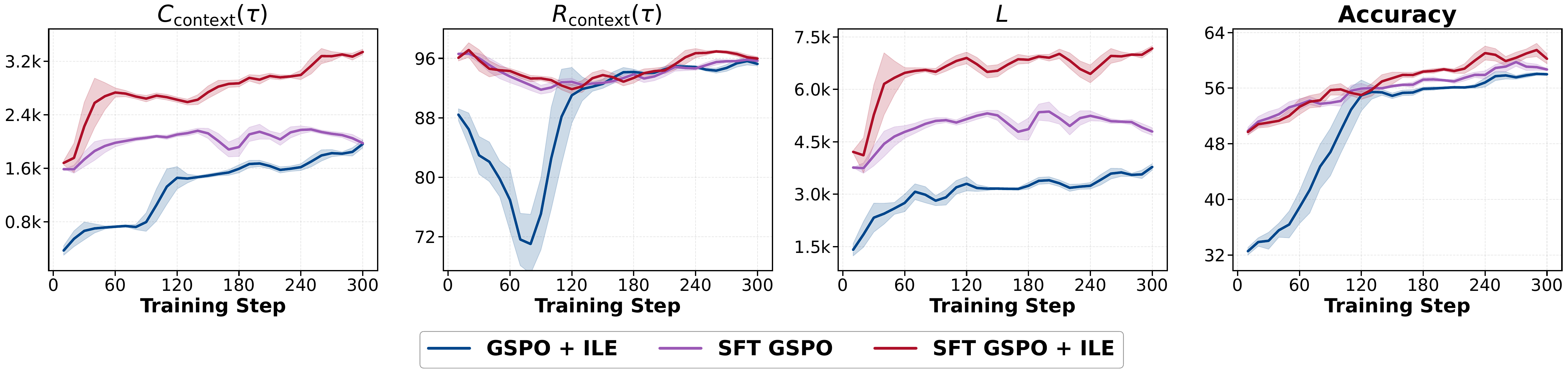}
\caption{Training Dynamics on Qwen3-4B-Base after SFT.}
\label{fig:sft}
\vspace{-1.2em}
\end{figure}

\subsection{Hyperparameter Sensitivity Analysis} \label{appendix:ablation}
We conduct an ablation study to verify that our method's improvements are robust to hyperparameter choices and to understand the impact of exploration constraints.

\paragraph{Sensitivity to State Abstraction ($n$-gram).} 
\begin{figure}[H]
\centering
\includegraphics[width=\linewidth]{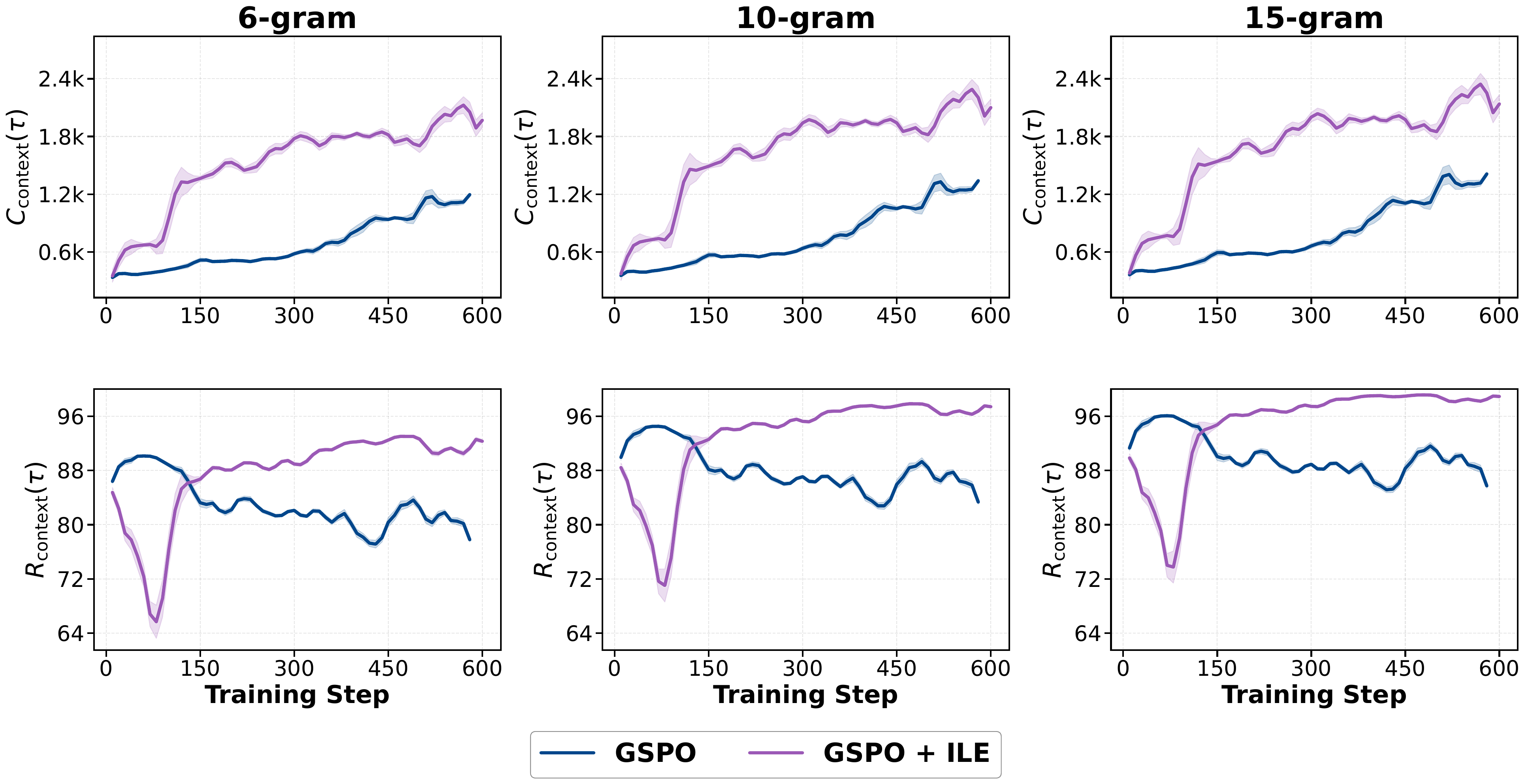}
\caption{$C_{\text{context}}$ and $R_{\text{context}}$ dynamics across {6, 10, 15}-grams.}
\label{fig:ngram_sensitive}
\vspace{-1.2em}
\end{figure}
We first analyze the impact of the context window size $n$ used for defining state abstraction (Equation~\ref{eq:state_abstraction}).
As shown in Figure~\ref{fig:ngram_sensitive}, the relative performance trends between standard GSPO and our LIE-enhanced method remain consistent across $n \in \{6, 10, 15\}$, confirming that the improvements are not an artifact of a specific metric configuration. 
However, qualitative inspection reveals the trade-offs at extreme values:
\begin{itemize}
    \item \textbf{Small $n$ ($n=6$):} The metric becomes overly sensitive to trivial syntactic patterns.
    We observed that $n=6$ frequently flags structural tokens (e.g., repeated LaTeX formatting like ``\texttt{\textbackslash \ \textbackslash}'') as redundancy, introducing noise into the reward signal.
    \item \textbf{Large $n$ ($n=15$):} The state space becomes overly sparse. With longer windows, the \textit{Unique State Count} remains naturally high even for semantically repetitive chains, making it harder to detect the "Shallow Exploration Trap."
\end{itemize}
We adopt $n=10$ as the optimal balance, capturing semantic logical steps while filtering out low-level syntactic noise.

\paragraph{Impact of Lengthening ($\Delta L$).}
In the training stage, we further ablate the effect of the exploration increment limit $\Delta L$, which governs the pace at which we force the model to expand its reasoning horizon. As illustrated in Figure~\ref{fig:delta_l_ablate}, we observe that:
\begin{itemize}
    \item \textbf{Moderate Extension ($\Delta L = 100, 500$):} These settings successfully activate the "capacity for exploration," yielding stable growth in both trajectory length and accuracy.
    \item \textbf{Aggressive Extension ($\Delta L = 8k$):} Removing the progressive constraint (i.e., immediately demanding max length) destabilizes training. The model, unable to legitimately reason at such lengths instantly, collapses into degenerate repetition loops to satisfy the length reward, leading to a degradation in reasoning quality.
\end{itemize}
\begin{figure*}[]
    \centering
    \includegraphics[width=\linewidth]{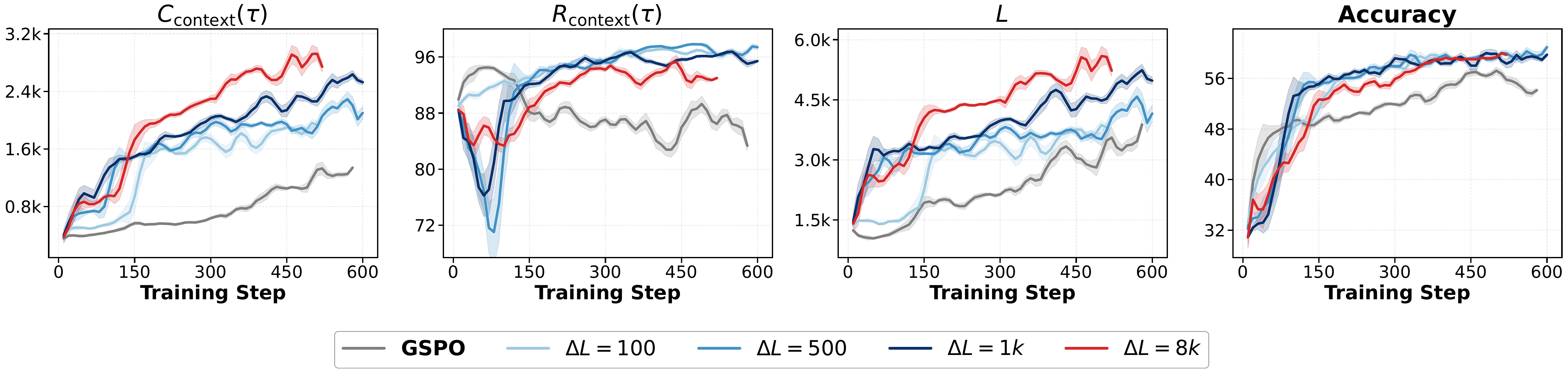}
    \caption{Training dynamics of different $\Delta L$}
    \label{fig:delta_l_ablate}
    % \vspace{-1.2em}
\end{figure*}

\paragraph{Impact of Repetition Threshold ($\Theta$).}
We examine the sensitivity of the redundancy penalty threshold $\Theta$. As shown in Table~\ref{tab:ablate_theta}, our method achieves optimal performance at $\Theta=10$ (Avg 52.7\%), surpassing both stricter ($\Theta=6$) and looser ($\Theta=15$) settings. Notably, the strict threshold ($\Theta=6$) leads to a performance drop on the challenging AIME25 benchmark (21.5\%) compared to the baseline (22.2\%).
We speculate the reasons are twofold:
\begin{itemize}
    \item \textbf{Over-penalization ($\Theta=6$):} Mathematical reasoning requires a degree of natural repetition (e.g., restating variables or structural connectors like "therefore"). An overly strict threshold likely triggers false positives, penalizing valid reasoning steps and disrupting the logical flow required for complex problems.
    \item \textbf{Ineffective Constraint ($\Theta=15$):} A loose threshold fails to sufficiently curb the "Shallow Exploration Trap." It allows the model to "game" the length-incentivized reward ($R_{\text{len}}$) by generating repetitive low-entropy sequences that satisfy the length requirement without achieving genuine state expansion, thereby reducing exploration efficiency.
\end{itemize}

\begin{table}[H]
\centering
\caption{The Ablation study of $\Theta$.}
\label{tab:ablate_theta}
\resizebox{0.7\columnwidth}{!}{%
\begin{tabular}{ccccccc}
\toprule
\multicolumn{1}{l}{\textbf{}} & \textbf{MATH} & \textbf{Olympiad} & \textbf{AIME} & \textbf{AMC}  & \textbf{AIME25} & \textbf{Avg.} \\ \midrule
\textbf{GRPO w/Clip-Higher}   & 86.4          & 54.1              & 25.2          & 61.8          & 22.2            & 49.9          \\ \midrule
\textbf{+\method($\Theta=6$)}           & 86.0          & 55.4     & 29.2          & 65.4          & 21.5            & 51.5          \\
\textbf{+\method($\Theta=10$)}          & \textbf{88.8} & 54.1              & \textbf{30.4} & \textbf{65.5} & \textbf{24.5}   & \textbf{52.7} \\
\textbf{+\method ($\Theta=15$)}          & 86.8          & \textbf{56.0}              & 28.5          & 63.9          & 22.4            & 51.5          \\ \bottomrule
\end{tabular}%
}
\end{table}

\paragraph{Impact of Repetition Magnitude $\beta$}
The redundancy penalty weight $\beta$ (Equation~\ref{eq:rep_penalty}) serves as a hyperparameter for balancing exploration capacity and efficiency.
As illustrated in the training dynamics in Figure~\ref{fig:beta_ablate}, we observe that the final reasoning accuracy is robust to the choice of $\beta$.
Both $\beta=0.3$ and $\beta=0.6$ achieve nearly identical convergence in accuracy, indicating that the penalty magnitude does not affect the model's fundamental problem-solving capability.

The primary effect of a higher $\beta$ is the promotion of conciseness. As shown in the length ($L$) and entropy plots, $\beta=0.6$ leads to shorter reasoning trajectories and a more rapid decrease in policy entropy compared to $\beta=0.3$.
This suggests that while a smaller $\beta$ allows for more ``loose'' exploration, a larger $\beta$ encourages the model to find more efficient, high-density reasoning paths to the correct answer.
\begin{figure}[h]
    \centering
    \includegraphics[width=\linewidth]{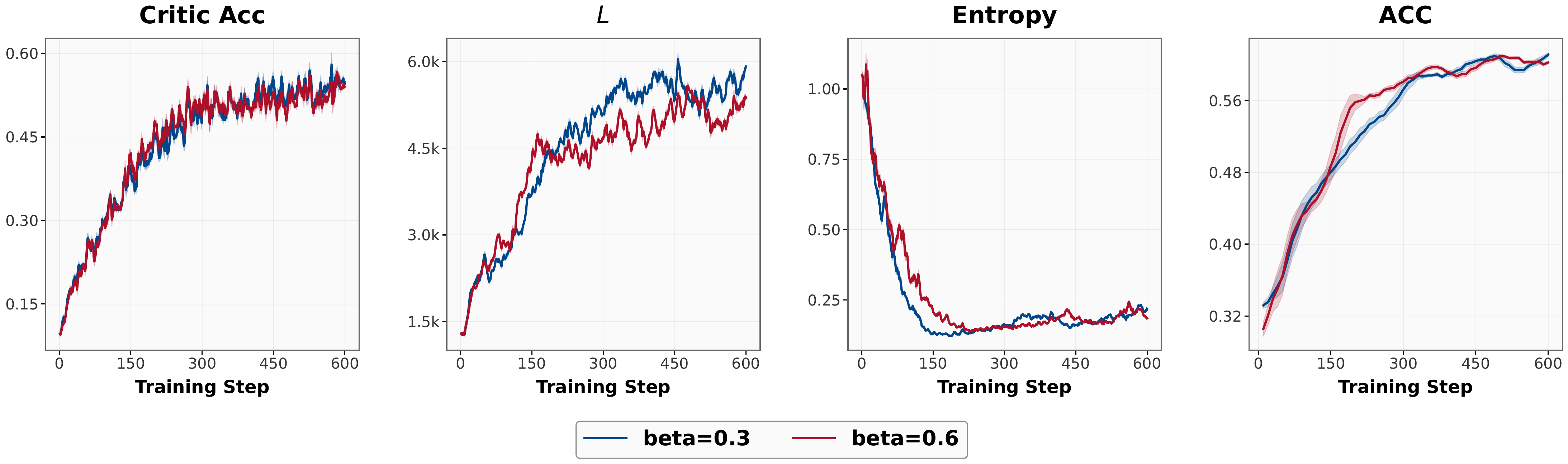}
    \caption{Training dynamics of different $\beta$.}
    \label{fig:beta_ablate}
    \vspace{-1.2em}
\end{figure}

\subsection{Ablation Study of the Reward Shaping Recipe} \label{appendix:reward_ablation}
To understand the contribution of each component in our final recipe (Equation~\ref{eq:reward_recipe}), we perform an exhaustive ablation study.
As illustrated in Figure~\ref{fig:reward_ablate}, the interaction between the length reward ($R_{\text{len}}$) and the redundancy penalty ($R_{\text{red}}$) is key to unlocking effective in-context exploration.
\begin{itemize}
    \item \textbf{Length reward ($R_{\text{len}}$) as the capacity driver:} (Proposition~\ref{propos:length_as_capacity}) The brown curves (GSPO + $R_{\text{len}}$) demonstrate that explicitly incentivizing length successfully breaks the ``Shallow Exploration Trap,'' leading to a surge in the total distinct state count $C_{\text{context}}$. However, this comes at a significant cost: the distinct ratio $R_{\text{context}}$ plummets, and length becomes extremely large. This indicates that without constraints, the model tends to satisfy the length requirement through ``thought padding'' --- generating repetitive or low-value tokens.
    
    \item \textbf{Redundancy reward ($R_{\text{red}}$) as the quality filter:} Conversely, when only the redundancy penalty is added (red curves), the model maintains a high $R_{\text{context}}$, but the average trajectory length $L$ remains low. This suggests that while the penalty prevents repetition, it does not provide the necessary drive to navigate deeper into the state space for complex reasoning.
    
    \item \textbf{Synergistic effect of the \method recipe:} Our complete recipe (GSPO + LIE, purple curves) achieves a superior balance. By combining $R_{\text{len}}$ and $R_{\text{red}}$, we create a synergistic effect: $R_{\text{len}}$ provides the \textit{computational budget} to explore, while $R_{\text{red}}$ ensures that this budget is spent on \textit{diverse and meaningful} reasoning states. As a result, the model exhibits a steady growth in both $C_{\text{context}}$ and length, while maintaining a healthy $R_{\text{context}}$. This structured exploration ultimately leads to the highest and most stable accuracy gains among all variants.
\end{itemize}

In summary, the \method recipe is not merely a sum of its parts; it is a mechanism that converts raw sequence length into effective reasoning depth.

\begin{figure}[h]
  \centering
  \includegraphics[width=\linewidth]{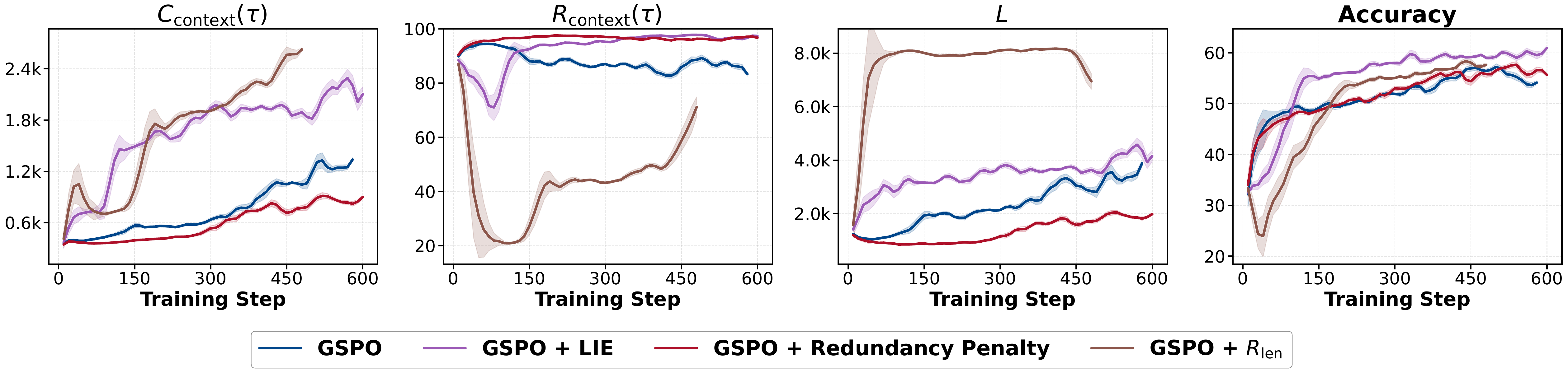}
  \caption{Training dynamics of different reward components.}
  \label{fig:reward_ablate}
  \vspace{-1.2em}
\end{figure}

% \begin{table}[]
% \centering
% \caption{Average response length during test-time.}
% \resizebox{0.7\columnwidth}{!}{%
% \begin{tabular}{ccccccc}
% \toprule
% \textbf{Model}                          & \textbf{MATH} & \textbf{Olympiad} & \textbf{AMC} & \textbf{AIME} & \textbf{AIME25} & \textbf{Avg.} \\ \midrule
% \textbf{GRPO}                           & 1173.9        & 2394.1            & 2482.2       & 4470          & 3054.1          & 2714.9        \\
% \textbf{GRPO + \method}                 & 1686.4        & 2729              & 3173.3       & 4685.9        & 4077.2          & 3270.4        \\
% \bottomrule
% \end{tabular}%
% }
% \end{table}

\section{Case Study} \label{appendix:case_study}
To analyze the internal reasoning dynamics, we follow the principle of \cite{feng2025characterizes} and employ Claude-3.7-Thinking to extract the underlying reasoning graph structure from the model's generated responses.
This allows us to visualize and quantify the "thought process" beyond simple token sequences or behaviors.

\paragraph{Qualitative Analysis.}
We select a representative problem from the AIME benchmark where the baseline fails while our method succeeds. As visualized in Figure~\ref{fig:gspo_case_study}, the \textbf{GSPO baseline} (Left) follows a linear and shallow reasoning path.
It attempts a direct derivation but fails to cross-check its intermediate steps, quickly converging to an incorrect answer ($324$).

In contrast, the model trained with our \method recipe (Figure~\ref{fig:LIE_case_study}) exhibits a significantly richer reasoning topology.
Crucially, the length incentive activates \textit{in-context exploration} behaviors: the model spawns alternative hypotheses (``Alternative Approach: Using Polar Form''), performs explicit self-verification (``Step 5: Verification''), and successfully identifies and corrects a calculation error (``Calculation Error'').
This capacity to branch out and backtrack allows the model to recover from initial pitfalls and ultimately derive the correct solution ($540$).

\paragraph{Quantitative Structural Metrics}
To verify if this observation holds statistically, we compute the average \textbf{Depth} (maximum path length in the reasoning graph) and \textbf{Width} (average branching factor) across the 40 samples from AIME.
As shown in Table ~\ref{tab:structure}, our method consistently expands the reasoning structure:

\begin{itemize}
    \item \textbf{Increased Depth ($13.8 \rightarrow 14.75$):} The model constructs longer logical chains, enabling deeper decomposition of complex problems.
    \item \textbf{Increased Width ($2.15 \rightarrow 2.30$):} More importantly, the increased width indicates that the model is not merely ``padding'' the response with empty tokens. Instead, it engages in broader exploration by considering multiple parallel hypotheses or verification paths.
\end{itemize}

These structural metrics confirm that explicitly incentivizing response length effectively translates into a broader search horizon, enabling the model to navigate the state space more thoroughly.

\begin{figure}[htbp]
    \centering
    \begin{subfigure}[b]{0.3\textwidth}
      \centering
      \includegraphics[width=\textwidth]{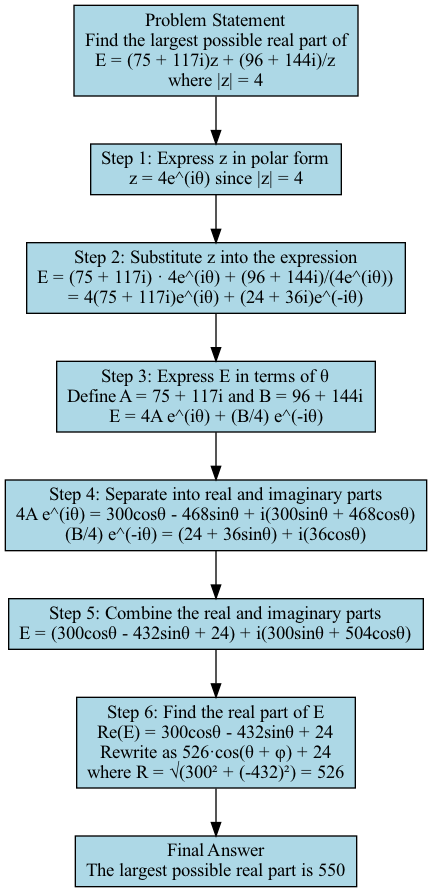}
      \caption{GSPO}
      \label{fig:gspo_case_study}
    \end{subfigure}
    \hfill
    \begin{subfigure}[b]{0.5\textwidth}
      \centering
      \includegraphics[width=\textwidth]{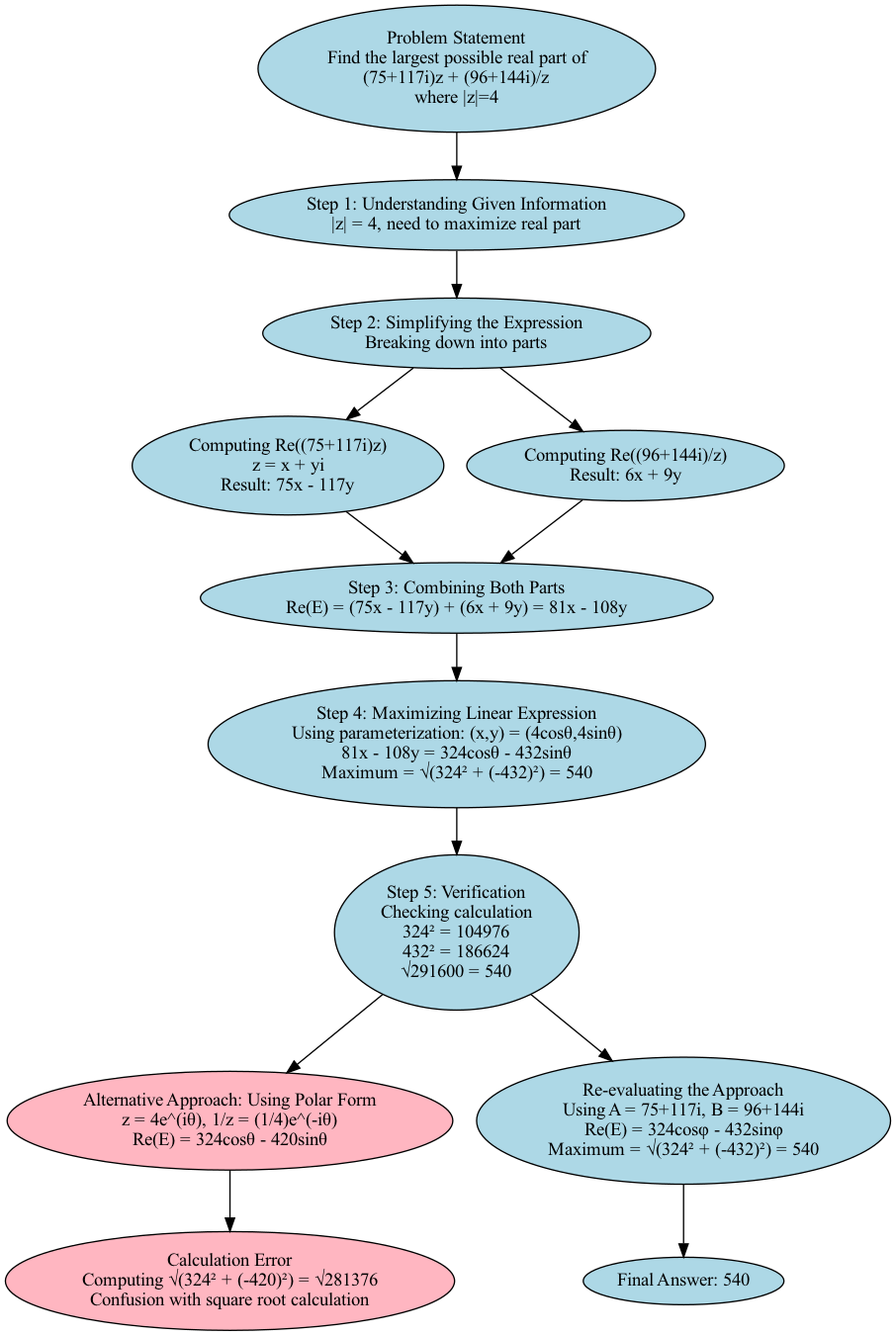}
      \caption{GSPO + \method}
      \label{fig:LIE_case_study}
    \end{subfigure}
    \caption{A case from AIME on GSPO and our recipe models.}
    \label{fig:case_study}
\end{figure}

\begin{table}[h]
\caption{Exploration width and depth metrics.}
  \label{tab:structure}
  \centering
  \resizebox{0.3\columnwidth}{!}{%
  \begin{tabular}{ccc}
  \toprule
  \textbf{Model}          & \textbf{Depth} & \textbf{Width} \\ \midrule
  \textbf{GSPO}           & 13.8           & 2.15           \\
  \textbf{GSPO + \method} & 14.75          & 2.3            \\ \bottomrule
  \end{tabular}%
}
\end{table}

\end{document}